\newcommand\labelAndRemember[2]
\gdef\csname labeled:#1\endcsname{#2}%
\newcommand\recallLabel[1]
\endcsname\tag{\ref{#1}}}
\newcommand\labelr[2]
\gdef\csname labeled:#1\endcsname{#2}%
\newcommand\recall[1]
\newcolumntype{H}{>{\setbox0=\hbox\bgroup}c<{\egroup}@{}}
\newcolumntype{Z}{>{\setbox0=\hbox\bgroup}c<{\egroup}@{\hspace*{-\tabcolsep}}}
\definecolor{light-gray}{gray}{0.9}
\newtheorem{theorem}{Theorem}
\newtheorem{lemma}[theorem]{Lemma}
\newtheorem{proposition}[theorem]{Proposition}
\newtheorem{definition}[theorem]{Definition}
\theoremstyle{definition}
\renewcommand{\tilde}{\widetilde}
\renewcommand{\epsilon}{\varepsilon}
\newcommand{\RR}{\mathbb{R}}
\newcommand{\NN}{\mathbb{N}}
\newcommand{\eps}{\varepsilon}
\newcounter{cnt}
\xdef \csname c\Alph{cnt}\endcsname {\noexpand\mathcal{\Alph{cnt}}}%
\xdef \csname b\Alph{cnt}\endcsname {\noexpand\mathbb{\Alph{cnt}}}%
\newcommand{\ind}{\mathrm{1}}
\DeclareMathOperator*{\argmax}{arg\,max}
\newcommand{\abs}[1]{\left|#1\right|}
\DeclarePairedDelimiterX{\ddiv}[2]{(}{)}{%
  #1\;\delimsize\|\;#2%
}
\newcommand{\poly}{\operatorname{poly}}
\newcommand{\norm}[1]{\left\|{#1}\right\|} %
\newcommand{\lone}[1]{\norm{#1}_1} %
\newcommand{\linf}[1]{\norm{#1}_\infty} %
\renewcommand{\cO}{\mathcal{O}}
\newcommand{\tO}{\widetilde{\cO}}
\newcommand{\wt}{\widetilde}
\renewcommand{\O}{\mathbb{O}}
\newcommand{\simiid}{\stackrel{\rm iid}{\sim}}
\newcommand{\mo}{{MO}}
\newcommand{\momdp}{{MOMDP}}
\newcommand{\kmomdp}{$k$-{MOMDP}}
\newcommand{\Y}{\mathbb{Y}}
\newcommand{\komle}{{\sc $k$-OMLE}}
\newcommand{\omle}{{\rm OMLE}}
\newcommand{\ost}{{\sc OST}}
\newcommand{\clt}{{\sf closeness\_test}}
\newcommand{\lsa}{{\sc Assign\_Pseudo\_States}}
\newcommand{\pires}{{\sc $\Pi_{\sf singleobs}$}}
\newcommand{\paren}[1]{{\left( #1 \right)}}
\newcommand{\set}[1]{{\left\{ #1 \right\}}}
\newcommand{\sets}[1]{{\{ #1 \}}}
\newcommand{\defeq}{\mathrel{\mathop:}=}
\newcommand{\mat}[1]{\ensuremath{\mathbf{#1}}}
\newcommand{\rank}{\mathrm{rank}}
\newcommand{\indic}[1]{\mathbf{1}\left\{#1\right\}} %
\newcommand{\R}{\mathbb{R}}
\newcommand{\I}{\mat{I}}
\newenvironment{proof-sketch}{\noindent{\bf Proof Sketch}
  \hspace*{1em}}{\qed\bigskip\\}
\newenvironment{proof-idea}{\noindent{\bf Proof Idea}
  \hspace*{1em}}{\qed\bigskip\\}
\newenvironment{proof-of-lemma}[1][{}]{\noindent{\bf Proof of Lemma {#1}}
  \hspace*{1em}}{\qed\bigskip\\}
\newenvironment{proof-of-proposition}[1][{}]{\noindent{\bf
    Proof of Proposition {#1}}
  \hspace*{1em}}{\qed\bigskip\\}
\newenvironment{proof-of-theorem}[1][{}]{\noindent{\bf Proof of Theorem {#1}}
  \hspace*{1em}}{\qed\bigskip\\}
\newenvironment{inner-proof}{\noindent{\bf Proof}\hspace{1em}}{
  $\bigtriangledown$\medskip\\}
\newenvironment{proof-attempt}{\noindent{\bf Proof Attempt}
  \hspace*{1em}}{\qed\bigskip\\}
\newenvironment{proof-of}[1][{}]{\noindent{\bf #1}
  \hspace*{1em}}{\qed\bigskip\\}
\author{
    Jiacheng Guo\thanks{Fudan University. Email: \texttt{jcguo19@fudan.edu.cn}}
    \quad
    Minshuo Chen\thanks{Princeton University. Email: \texttt{\{mc0750,mengdiw\}@princeton.edu}}
    \quad
    Huan Wang\thanks{Salesforce AI Research. Email: \texttt{\{huan.wang,cxiong,yu.bai\}@salesforce.com}}
    \quad
    Caiming Xiong\footnotemark[3]
    \quad
    Mengdi Wang\footnotemark[2]
    \quad
    Yu Bai\footnotemark[3]
}
\title{Sample-Efficient Learning of POMDPs \\ with
Multiple Observations In Hindsight}
\begin{document}
\maketitle

\begin{abstract}

This paper studies the sample-efficiency of learning in Partially Observable Markov Decision Processes (POMDPs), a challenging problem in reinforcement learning that is known to be exponentially hard in the worst-case. Motivated by real-world settings such as loading in game playing, we propose an enhanced feedback model called ``multiple observations in hindsight'', where after each episode of interaction with the POMDP, the learner may collect multiple additional observations emitted from the encountered latent states, but may not observe the latent states themselves. We show that sample-efficient learning under this feedback model is possible for two new subclasses of POMDPs: \emph{multi-observation revealing POMDPs} and \emph{distinguishable POMDPs}. Both subclasses generalize and substantially relax \emph{revealing POMDPs}---a widely studied subclass for which sample-efficient learning is possible under standard trajectory feedback. Notably, distinguishable POMDPs only require the emission distributions from different latent states to be \emph{different} instead of \emph{linearly independent} as required in revealing POMDPs.

\end{abstract}
\section{Introduction}
Partially observable reinforcement learning problems, where the agent must make decisions based on incomplete information about the environment, are prevalent in practice, such as robotics~\citep{openai2019solving}, economics~\citep{zheng2020ai} and decision-making in education or clinical settings~\citep{10.2307/23323677}. However, from a theoretical standpoint, it is well established that learning a near-optimal policy in Partially Observable Markov Decision Processes (POMDPs) requires exponentially many samples in the worst case~\citep{mossel2005learning,krishnamurthy2016pac}. Such a worst-case exponential hardness stems from the fact that the observations need not provide useful information about the true underlying (latent) states, prohibiting efficient exploration. This is in stark contrast to fully observable RL in MDPs in which a near-optimal policy can be learned in polynomially many samples, without any further assumption on the MDP~\citep{kearns2002near,auer2008near, azar2017minimax}.

Towards circumventing this hardness result, one line of recent work seeks additional structural conditions under which a polynomial sample complexity is possible \citep{katt2018bayesian,liu2022partially,efroni2022provable}. A prevalent example there is revealing POMDPs~\citep{jin2020sample,liu2022partially}, which requires the observables to reveal \emph{some} information about the true latent state so that the latent state is (probabilistically) distinguishable from the observables.
Another approach, which we explore in this paper, entails using enhanced feedback models that deliver additional information beyond what is provided by standard trajectory-based feedback. This is initiated by the work of~\citet{lee2023learning}, who proposed the framework of \emph{Hindsight Observable POMDPs} (HOMDPs). In this setting, latent states are revealed in hindsight \emph{after each episode has finished}. This hindsight revealing of latent states provides crucial information to the learner, and enables the adaptation of techniques for learning fully observable MDPs~\citep{azar2017minimax}. As a result, it allows a polynomial sample complexity for learning any POMDP (tabular or with a low-rank transition) under this feedback model, negating the need for further structural assumptions~\citep{lee2023learning}.

In this paper, we investigate a new feedback model that reveals \emph{multiple additional observations}---emitted from the same latent states as encountered during each episode---in hindsight to the learner. As opposed to the hindsight observable setting, here the learner does not directly observe the latent states, yet still gains useful information about the latent states via the additional observations.
This model resembles practical scenarios such as the save/load mechanism in game playing, in which the player can replay the game from a previously saved state. Similar feedback models such as RL with replays~\citep{amortila2022expert,li2021sampleefficient,lee2023learning} have also been considered in the literature in fully observable settings.
This feedback model is also theoretically motivated, as the additional observations in hindsight provide more information to the learner,
which in principle may allow us to learn a broader class of POMDPs than under standard feedback as studied in existing work~\citep{jin2020sample,liu2022partially,zhan2022pac,chen2022partially,liu2022optimistic}.

Our contributions can be summarized as follows.
\begin{itemize}[leftmargin=2em]
    \item We define a novel feedback model---POMDPs with $k$ multiple observations (\kmomdp{})---for enhancing learning in POMDPs over the standard trajectory feedback  (Section \ref{sec:mo}). Under \kmomdp{} feedback, after each episode is finished, the learner gains additional observations emitted from the same latent states as those visited during the episode. 

    \item We propose \emph{$k$-\mo{}-revealing POMDPs}, a natural relaxation of revealing POMDPs to the multiple observation setting, and give an algorithm (\komle{}) that can learn $k$-\mo{}-revealing POMDPs sample-efficiently under \kmomdp{} feedback (Section~\ref{sec:reveal}). Concretely, we provide learning results for both the tabular and the low-rank transition settings.

    \item We propose \emph{distinguishable POMDPs} as an attempt towards understanding the minimal structural assumption for sample-efficient learning under \kmomdp{} feedback (Section~\ref{sec:dis-def}). While being a natural superset of $k$-\mo{}-revealing POMDPs for all $k$, we also show a reverse containment that any distinguishable POMDP is also a $k$-\mo{}-revealing POMDP with a sufficiently large $k$. Consequently, any distinguishable POMDP can be learned sample-efficiently by reducing to $k$-\mo{}-revealing POMDPs and using the \komle{} algorithm (Section~\ref{sec:relation}).

    \item For distinguishable POMDPs, we present another algorithm \ost{} (Section~\ref{sec:ost}) that achieves a sharper sample complexity than the above reduction approach. The algorithm builds on a closeness testing subroutine using the multiple observations to infer the latent state up to a permutation. Technically, compared with the reduction approach whose proof relies \emph{implicitly} on distribution testing results, the \ost{} algorithm utilizes distribution testing techniques \emph{explicitly} in its algorithm design.

\end{itemize}
\subsection{Related Work}

\paragraph{Sample-efficient learning of POMDPs}
Due to the non-Markovian characteristics of observations, policies in POMDPs generally rely on the complete history of observations, making them more challenging to learn compared to those in fully observable MDPs. Learning a near-optimal policy in POMDPs is statistically hard in the worst case due to a sample complexity lower bound that is exponential in the horizon~\citep{mossel2005learning,krishnamurthy2016pac,papadimitriou1987complexity}, and is also computationally hard~\citep{papadimitriou1987complexity,vlassis2012computational}.

To circumvent this hardness, a body of work has been dedicated to studying various sub-classes of POMDPs, such as revealing POMDPs~\citep{hsu2012spectral,guo2016pac,jin2020provably,liu2022partially,chen2023lower}, and decodable POMDPs~\citep{efroni2022provable} (with block MDPs~\citep{krishnamurthy2016pac,du2019provably,misra2020kinematic} as a special case). Other examples include reactiveness~\citep{jiang2017contextual}, revealing 
(future/past sufficiency) and low rank~\citep{cai2022reinforcement, wang2022embed}, latent MDPs~\citep{kwon2021rl,zhou2022horizon}, learning short-memory policies~\citep{uehara2022provably}, and deterministic transitions~\citep{uehara2022computationally}. Our definitions of $k$-\mo{}-revealing POMDPs and distinguishable POMDPs can be seen as additional examples for tractably learnable subclasses of POMDPs under a slightly stronger setting (\kmomdp{} feedback).

More recently, \citet{zhan2022pac,chen2022partially,liu2022optimistic,zhong2022posterior} study learning in a more general setting---Predictive State Representations (PSRs), which include POMDPs as a subclass. \citet{zhan2022pac} show that sample-efficient learning is possible in PSRs, and \citet{chen2022partially} propose a unified condition (B-stability, which subsume revealing POMDPs and decodable POMDPs as special cases) for PSRs, and give algorithms with sharp sample complexities. Our results on $(\alpha,k)$-MO-revealing POMDPs can be viewed as an extension of the results of \citet{chen2022partially} for revealing POMDPs, adapted to the multiple observation setting.

\paragraph{POMDPs with enhanced feedback}
Another line of work studies various enhanced feedback models for POMDPs~\citep{kakade2023learning,amortila2022expert,li2021sampleefficient,lee2023learning}. \citet{kakade2023learning} propose an interactive access model in which the algorithm can query for samples from the conditional distributions of the Hidden Markov Models
(HMMs). Closely related to our work,~\citet{lee2023learning} study in the Hindsight Observable Markov Decision Processes (HOMDPs) as
the POMDPs where the latent states are revealed to the learner in hindsight.
Our feedback model can be viewed as a conceptual weakening of their model as we do not, though we remark that neither is strictly stronger than the other (learner can use neither one to simulate the other); see Section~\ref{sec:mo} for details. Also, in the fully observable setting, \citet{amortila2022expert,li2021sampleefficient} have studied feedback models similar to ours where the learner could backtrack and revisit previous states.

\paragraph{Distribution testing}
Our analyses for distinguishable POMDPs build on several techniques from the distribution testing literature~\citep{paninski2008coincidence,andoni2009external,indyk2012approximating,ba2011sublinear,valiant2011estimating,goldreich2011testing,batu2013testing,acharya2015optimal, chan2013optimal}; see~\citep{canonne2020survey} for a review. Notably, our OST algorithm builds on subroutines for the \emph{closeness testing} problem, which involves determining whether two distributions over a set with $n$ elements are $\varepsilon$-close from samples. \citet{batu2013testing} were the first to formally define this problem, proposing a tester with sub-linear (in $n$) sample complexity with any failure probability $\delta$. Subsequent work by \citet{chan2013optimal} introduced testers whose sample complexity was information-theoretically optimal for the closeness testing problem with a constant probability. The sample complexity of their tester in $\ell_1$ norm is $\Theta\left(\max \{n^{2 / 3} / \varepsilon^{4 / 3}, n^{1 / 2} / \varepsilon^2\}\right)$. Our OST algorithm uses an adapted version of their tester and the technique of \citet{batu2013testing} to determine whether two latent states are identical with any failure probability through the multiple observations emitted from them.

\section{Preliminaries}
\paragraph{Notations}
For any natural number $n\in \NN$, we use $[n]$ to represent the set $[n]={1,2,\ldots,n}$. We use $\I_m$ to denote the identity matrix within $\R^{m\times m}$. For vectors, we denote the $\ell_p$-norm as $\|\cdot\|_p$ and $\|\cdot \|_{p\rightarrow p}$, and the expression $\|x\|_{A}$ represents the square root of the quadratic form $x^\top Ax$. Given a set $\cS$, we use $\Delta(\cS)$ to denote the set of all probability distributions defined on $\cS$. For an operator $\mathbb{O}$ defined on $\cS$ and a probability distribution $b\in \Delta(\cS)$, the notation $\mathbb{O} b : \mathcal{O} \rightarrow \RR$ denotes the integral of $\mathbb{O}(o\mid s)$ with respect to $b(s)$, where the integration is performed over the entire set $\cS$. For two series $\{a_n\}_{n\geq 1}$ and $\{b_n\}_{n\geq 1}$, we use $a_n \leq O(b_n)$ to mean that there exists a positive constant $C$ such that $a_n \leq C\cdot b_n$. For $\lambda\geq 0$, we use $\operatorname{Poi}(\lambda)$ to denote the Poisson distribution with parameter $\lambda$. 

\paragraph{POMDPs}
In this work, we study partially observable Markov decision processes (POMDPs) with a finite time horizon, denoted as $\mathcal{P}$. The POMDP can be represented by the tuple $\mathcal{P}=\left(\mathcal{S}, \mathcal{A}, H, \mathcal{O}, d_0, \{r_h\}_{h=1}^H, \{\mathbb{T}_h\}_{h=1}^H, \{\mathbb{O}_h\}_{h=1}^H\right)$, where $\mathcal{S}$ denotes the state space, $\mathcal{A}$ denotes the set of possible actions, $H \in \mathbb{N}$ represents the length of the episode, $\mathcal{O}$ represents the set of possible observations, and $d_0$ represents the initial distribution over states, which is assumed to be known. The transition kernel $\mathbb{T}_h:\mathcal{S}\times\mathcal{A}\rightarrow \mathcal{S}$ describes the probability of transitioning from one state to another state after being given a specific action at time step $h$. The reward function $r_h: \cO \rightarrow [0,1]$ assigns a reward to each observation in $\mathcal{O}$, and $\mathbb{O}_h:\cS\rightarrow \Delta(\cO)$ is the observation distribution function at time step $h$. For a given state $s \in \mathcal{S}$ and observation $o \in \mathcal{O}$, $\mathbb{O}_h(o\mid s)$ represents the probability of observing $o$ while in state $s$. Note that (with known rewards and initial distribution) a POMDP can be fully described by the parameter $\theta = (\{\mathbb{T}_h\}_{h=1}^H,\{\mathbb{O}_h\}_{h=1}^H)$. We use $\tau_h:= (o_{1:h}, a_{1:h})=(o_1,a_1, \cdots, o_{h-1}, a_{h-1},o_h,a_h)$ to denote a trajectory of observations and actions at time step $h\in[H]$. We use $S$, $A$, $O$ to denote the cardinality of $\cS$, $\cA$, and $\cO$ respectively.

\paragraph{Learning goal}
A policy $\pi$ is a tuple $\pi=(\pi_{1}, \ldots, \pi_H)$, where $\pi_h: \tau_{h-1}\times \cO\rightarrow \Delta(\mathcal{A})$ is a mapping from histories up to step $h$ to actions. We define the value function for $\pi$ for model $\theta$ by $V_{\theta}(\pi)=\mathbb{E}_{o_{1: H} \sim \pi}^{\mathcal{P}}[\sum_{h=1}^H r_h(o_h)]$, namely as the expected reward received by following $\pi$. We use $V^{*}(\theta) =\max_{\pi}V_{\theta}(\pi)$ and $\pi^*(\theta)=\argmax_{\pi}V_{\theta}(\pi)$ to denote the optimal value function and optimal policy for a model $\theta$. We denote the parameter of the true POMDP as $\theta^*$. We also use the shorthand $V(\pi)\defeq V_{\theta^*}(\pi)$.
\section{POMDPs with Multiple Observations}\label{sec:mo}

In this section, we propose POMDPs with $k$ multiple observations (\kmomdp{}), a new enhanced feedback model for learning POMDPs defined as follows.
In the $t$-th round of interaction, the learner
\begin{enumerate}[topsep=0pt, leftmargin=2em, itemsep=0pt]
\item Plays an episode in the POMDP with a policy $\pi^t$, and observes the standard trajectory feedback $\tau^t=(o^{t,(1)}_1,a^t_1,\cdots, o^{t,(1)}_H,a^t_H)$ (without observing the latent states $\sets{s_h^t}_{h\in[H]}$).
\item Receives $k-1$ additional observations $o_h^{t,(2:k)}\simiid \O_h(\cdot|s_h^t)$ for $h\in[H]$ \emph{after the episode ends}.
\end{enumerate}
At $k=1$, the feedback model is the same as the standard trajectory feedback. At $k>1$, the $k-1$ additional observations cannot affect the trajectory $\tau^t$ but can reveal more information about the past encountered latent states, which could be beneficial for learning (choosing the policy for the next round). We remark that such a ``replay'' ability has also been explored in several recent works, such as \citet{lee2023learning} who assume that the learner could know the visited states after each iteration, and \citet{amortila2022expert,li2021sampleefficient} who assume that the learner could reset to any visited states then continue to take actions to generate a trajectory. 

We consider a general setting where the value of $k$ in \kmomdp{} can be determined by the learner. Consequently, for a fair comparison of the sample complexities, we take into account all observations (both the trajectory and the $(k-1)$ additional observations) when counting the number of samples, so that each round of interaction counts as $kH$ observations/samples.

\paragraph{Relationship with the hindsight observable setting}
Closely related to \kmomdp{},~\citet{lee2023learning}~propose the \emph{hindsight observable} setting, another feedback model for learning POMDPs in which the learner directly observes the true latent states $\sets{s_h^t}_{h\in[H]}$ after the $t$-th episode. In terms of their relationship, neither feedback model is stronger than (can simulate) the other in a strict sense, when learning from bandit feedback: Conditioned on the $k-1$ additional observations, the true latent state could still be random; Given the true latent state, the learner in general, don't know the emission distribution to simulate additional samples. However, our multiple observation setting is conceptually ``weaker'' (making learning harder) than the hindsight observatbility setting, as the true latent state is exactly revealed in the hindsight observable setting but only ``approximately'' revealed in our setting through the noisy channel of multiple observations in hindsight.

A natural first question about the \kmomdp{} feedback model is that whether it fully resolves the hardness of learning in POMDPs (for example, making any tabular POMDP learnable with polynomially many samples). The following result shows that the answer is negative.
\begin{proposition}[Existence of POMDP not polynomially learnable under $k$-MO feedback for any finite $k$]\label{prop:not}
For any $H,A\ge 2$, there exists a POMDP with $H$ steps, $A$ actions, and $S=O=2$ (non-revealing combination lock) that cannot be solved with $o(A^{H-1})$ samples with high probability under $k$-\momdp{} feedback for any $k\ge 1$.
\end{proposition}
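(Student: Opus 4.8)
The plan is to exhibit a family of \emph{combination-lock} POMDPs in which the emissions carry no information about the latent state at steps $1,\dots,H-1$, so that neither the trajectory nor the $k-1$ hindsight observations reveal anything about a hidden ``correct'' action sequence until the learner plays it exactly, and then to run a needle-in-a-haystack argument that is insensitive to $k$. Concretely, I would fix $\mathcal{S}=\{\mathsf{good},\mathsf{bad}\}$, $\mathcal{O}=\{0,1\}$, $|\mathcal{A}|=A$, horizon $H$, and $d_0=\delta_{\mathsf{good}}$, and index the family by a secret $\mathbf{a}^\star=(a^\star_1,\dots,a^\star_{H-1})\in\mathcal{A}^{H-1}$. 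For transitions: from $\mathsf{good}$ at step $h<H$, action $a^\star_h$ keeps the state in $\mathsf{good}$, while any other action moves it to $\mathsf{bad}$, which is absorbing. For emissions: set $\mathbb{O}_h(\cdot\mid\mathsf{good})=\mathbb{O}_h(\cdot\mid\mathsf{bad})$ (e.g.\ the point mass at $0$) for all $h<H$, so these steps are non-revealing, and at step $H$ set $\mathbb{O}_H(1\mid\mathsf{good})=1$, $\mathbb{O}_H(1\mid\mathsf{bad})=0$, with reward $r_H(o)=\indic{o=1}$ and $r_h\equiv 0$ for $h<H$. Then $V^\star=1$, attained only by policies that play $\mathbf{a}^\star$ over the first $H-1$ steps, and every prefix $a_{1:H-1}\ne\mathbf{a}^\star$ yields value $0$.

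The central reduction is the following: draw $\mathbf{a}^\star\sim\Unif(\mathcal{A}^{H-1})$. Because the emissions at steps $h<H$ are identical across states and the non-matching state at step $H$ is always $\mathsf{bad}$ (hence secret-independent), the \emph{entire} feedback of any episode in which the played prefix $a_{1:H-1}$ differs from $\mathbf{a}^\star$ --- trajectory observations, reward, and all $k-1$ hindsight observations at every step --- has a distribution that does not depend on $\mathbf{a}^\star$. I would formalize this by coupling the real interaction with a ``null'' interaction whose hindsight observations and step-$H$ emission are always generated from the $\mathsf{bad}$ state: the two transcripts coincide up to and including the first episode in which the learner plays $\mathbf{a}^\star$ exactly. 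Writing $P_t:=a^t_{1:H-1}$ for the prefix played in round $t$ and $N$ for the number of rounds, the null prefixes $\widetilde P_t$ are independent of $\mathbf{a}^\star$, so a union bound gives $\prob{\exists\, t\le N:\ P_t=\mathbf{a}^\star}=\prob{\exists\, t\le N:\ \widetilde P_t=\mathbf{a}^\star}\le \sum_{t\le N}\prob{\widetilde P_t=\mathbf{a}^\star}= N/A^{H-1}$.

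To conclude, on the complementary (no-match) event the learner's output policy $\hat\pi$ is independent of $\mathbf{a}^\star$ and, since observations are uninformative, induces some distribution $q$ over prefixes in $\mathcal{A}^{H-1}$; an $\varepsilon$-optimal policy with $\varepsilon<1/2$ must satisfy $q(\mathbf{a}^\star)>1/2$, which by Markov's inequality over $\mathbf{a}^\star$ has probability at most $2/A^{H-1}$. Hence the average success probability is at most $N/A^{H-1}+2/A^{H-1}=o(1)$ once $N=o(A^{H-1})$; since a budget of $o(A^{H-1})$ samples forces $N\le o(A^{H-1})$ rounds (each round costing $kH\ge1$ samples) for every finite $k$, an averaging (Yao-type) argument then produces a single secret $\mathbf{a}^\star$ on which the algorithm fails with probability bounded away from $0$, giving the claimed instance.

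The main obstacle is the coupling step that certifies the feedback is secret-independent before the first exact match. This is exactly where the non-revealing design is essential and where the $k-1$ hindsight observations must be handled with care: one must verify that, conditioned on a prefix mismatch, even these extra samples are drawn from state-independent emissions (steps $h<H$) or from the fixed $\mathsf{bad}$-state emission (step $H$), so that an adaptive learner --- free to choose $k$ and to condition on the full multi-observation history --- provably gains zero information about $\mathbf{a}^\star$ until it stumbles onto it. Making this conditional independence precise across the adaptive, randomized interaction (rather than for a fixed non-adaptive query sequence) is the delicate part of the argument.
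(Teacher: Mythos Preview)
Your proposal is correct and follows essentially the same combination-lock construction and needle-in-a-haystack argument as the paper. Your formalization via a coupling with a null interaction and a Yao-type averaging step is in fact more careful than the paper's own proof, which argues informally that the uninformative emissions at steps $h<H$ make the best strategy a random guess and then directly bounds the miss probability as $(A^{H-1}-T)/A^{H-1}$; the minor cosmetic difference is that the paper uses uniform emissions at steps $h<H$ rather than a point mass, but this is immaterial.
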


Proposition~\ref{prop:not} shows that some structural assumption on the POMDP is necessary for it to be sample-efficiently learnable in \kmomdp{} setting (proof can be found in Appendix \ref{app:not}), which we investigate in the sequel. This is in contrast to the hindsight observable setting~\citep{lee2023learning} where any tabular POMDP can be learned with polynomially many samples, and suggests that \kmomdp{} as an enhanced feedback model is in a sense more relaxed.

\section{$k$-MO-revealing POMDPs}
\label{sec:reveal}
We now introduce the class of $k$-\mo{}-revealing POMDPs, and show that they are sample-efficiently learnable under \kmomdp{} feedback.

\subsection{Definition}

To introduce this class, we begin by noticing that learning POMDPs under the \kmomdp{} feedback can be recast as learning an \emph{augmented} POMDP under standard trajectory feedback. Indeed, we can simply combine the observations during the episode and the hindsight into an augmented observation $\sets{o_h^{(1:k})}_{h\in[H]}$ which belongs to $\cO^k=\sets{o^{(1:k)}:o^{(i)}\in\cO}$. The policy class that the learner optimizes over in this setting is a \emph{restricted} policy class (denoted as \pires{}) that is only allowed to depend on the first entry $o_h^{(1)}$ instead of the full augmented observation $o_h^{(1:k)}$.

We now present the definition of a $k$-\mo{} revealing POMDP, which simply requries its augmented POMDP under the \kmomdp{} feedback is (single-step) revealing. For any matrix $\O=\sets{\O(o|s)}_{o,s\in\cO\times\cS}\in\R^{\cO\times \cS}$ and any $k\ge 1$, let $\O^{\otimes k}\in\R^{\cO^k\times \cS}$ denote the column-wise $k$ self-tensor of $\O$, given by $\O^{\otimes k}(o_{1:k}|s)=\prod_{i=1}^k \O(o_i|s)$. 

\begin{definition}[\mo{}-revealing POMDP]
For any $k\ge 1$ and $\alpha\in(0,1]$, a POMDP is said to be $(\alpha,k)$-\mo{}-revealing if its augmented POMDP under the \kmomdp{} feedback is $\alpha$-revealing. In other words, a POMDP is $k$-\mo{}-revealing if for all $h\in[H]$, the matrix $\O_h^{\otimes k}$ has a left inverse $\O_h^{\otimes k+}\in\R^{\cS\times\cO^k}$ (i.e. $\O_h^{\otimes k+}\O_h^{\otimes k}=\I_S$) such that
\begin{align*}
    \norm{\O_h^{\otimes k+}}_{1\to 1} \le \alpha^{-1}.
\end{align*}
\end{definition}
Above, we allow \emph{any} left inverse of $\O_h^{\otimes k}$ and use the matrix $(1\to 1)$ norm to measure the revealing constant following~\citet{chen2023lower}, which allows a tight characterization of the sample complexity.

As a basic property, we show that $(\alpha,k)$-\mo{}-revealing POMDPs are strictly larger subclasses of POMDPs as $k$ increases. The proof can be found in Appendix \ref{app:relation}.
\begin{proposition}[Relationship between $(\alpha, k)$-\mo{}-revealing POMDPs]\label{pro:relation}
For all $\alpha\in(0,1]$ and $k\geq1$, any $(\alpha,k)$-\mo{}-revealing POMDP is also an $(\alpha,k+1)$-\mo{}-revealing POMDP. Conversely, for all $k\ge 2$, there exists a POMDP that is $(\alpha,k+1)$-\mo{}-revealing for some $\alpha>0$ but not $(\alpha',k)$-\mo{}-revealing for any $\alpha'>0$.
\end{proposition}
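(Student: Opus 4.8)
The plan is to prove the two directions separately. For the containment I would exhibit a single linear ``marginalization'' operator that sends the $(k+1)$-fold augmented emission matrix back to the $k$-fold one and then compose it with the given left inverse; for the strict separation I would construct an explicit POMDP with stationary emissions whose $k$-fold self-tensor is rank-deficient while its $(k+1)$-fold self-tensor has full column rank, via Bernoulli emissions and a Vandermonde argument.

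\textbf{Containment.} Fix $h$ and abbreviate $\O := \O_h$. Let $P\in\R^{\cO^k\times\cO^{k+1}}$ be the operator that sums out the last coordinate, i.e.\ $P_{o_{1:k},\,o'_{1:k+1}} = \indic{o_{1:k}=o'_{1:k}}$. Since each column $\O(\cdot|s)$ is a probability distribution, $\sum_{o}\O(o|s)=1$, and a direct computation gives $P\,\O^{\otimes(k+1)} = \O^{\otimes k}$, because $\sum_{o_{k+1}}\O^{\otimes(k+1)}(o_{1:k+1}|s) = \O^{\otimes k}(o_{1:k}|s)\sum_{o_{k+1}}\O(o_{k+1}|s) = \O^{\otimes k}(o_{1:k}|s)$. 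Hence if $M$ is any left inverse of $\O^{\otimes k}$, then $MP$ is a left inverse of $\O^{\otimes(k+1)}$, since $MP\,\O^{\otimes(k+1)} = M\,\O^{\otimes k} = \I_S$. Each column of $P$ has exactly one nonzero entry, equal to $1$, so $\norm{P}_{1\to1}=1$ (the $(1\to1)$ norm is the maximum absolute column sum); submultiplicativity of the induced operator norm then yields $\norm{MP}_{1\to1}\le\norm{M}_{1\to1}\norm{P}_{1\to1}\le\alpha^{-1}$. Applying this at every $h$ shows the POMDP is $(\alpha,k+1)$-\mo{}-revealing.

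\textbf{Separation.} I would first note the two reductions: ``not $(\alpha',k)$-\mo{}-revealing for any $\alpha'>0$'' is equivalent to some $\O_h^{\otimes k}$ having \emph{no} left inverse at all (if every $\O_h^{\otimes k}$ had full column rank, each pseudoinverse would have finite $(1\to1)$ norm, so a single $\alpha'$ would work), and ``$(\alpha,k+1)$-\mo{}-revealing for some $\alpha>0$'' is equivalent to every $\O_h^{\otimes(k+1)}$ having full column rank. So it suffices to build one emission matrix $\O$ (used as the stationary emission at every step) with $\rank(\O^{\otimes k})<S$ but $\rank(\O^{\otimes(k+1)})=S$. I would take $\cO=\set{0,1}$, $S=k+2$, and columns $\O(\cdot|s)=(1-p_s,\,p_s)$ for $S$ distinct values $p_1,\dots,p_S\in(0,1)$ (e.g.\ $p_s=s/(S+1)$). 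Each column of $\O^{\otimes m}$ depends only on the Hamming weight $j$ of its index, so $\sum_s c_s\,\O^{\otimes m}(\cdot|s)=0$ holds iff $\sum_s c_s\,p_s^{\,j}(1-p_s)^{m-j}=0$ for every $j\in\set{0,\dots,m}$, i.e.\ iff $\sum_s c_s\,w_s^{(m)}=0$ with $w_s^{(m)} = (1-p_s)^m\,(1,r_s,\dots,r_s^m)$ and $r_s=p_s/(1-p_s)$. The $r_s$ are distinct, so these are nonzero-scaled Vandermonde vectors in $\R^{m+1}$, giving $\rank(\O^{\otimes m})=\min(S,m+1)$. With $S=k+2$ this equals $k+1<S$ at level $m=k$ and $k+2=S$ at level $m=k+1$, which is exactly the required rank jump.

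\textbf{Main obstacle.} The containment is routine once the marginalization identity is spotted. The crux is the clean rank formula $\rank(\O^{\otimes m})=\min(S,m+1)$ for the reverse direction; the step I would check most carefully is that linear dependence of the full $2^m$-dimensional tensor-power vectors is captured \emph{exactly} by dependence of the $(m+1)$-dimensional weight-indexed vectors $w_s^{(m)}$ (each Hamming-weight class contributes one scalar equation, and no information is lost or added), together with the fact that $p_s\in(0,1)$ guarantees nonvanishing scalings $(1-p_s)^m$ and distinct Vandermonde nodes $r_s$.
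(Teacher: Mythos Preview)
Your proof is correct and follows essentially the same route as the paper: for the containment you build the left inverse of $\O^{\otimes(k+1)}$ by ignoring the last observation (your $MP$ is exactly the paper's coordinatewise construction $\bar b_{s,o_{1:k+1}}:=b_{s,o_{1:k}}$), and for the separation you use the same binary-observation, $S=k+2$ Bernoulli construction with a Vandermonde argument. Your packaging via the marginalization operator $P$ and the explicit rank formula $\rank(\O^{\otimes m})=\min(S,m+1)$ is slightly tidier than the paper's presentation, but the underlying ideas are identical.
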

Proposition~\ref{pro:relation} shows that $(\alpha,k)$-\mo{}-revealing POMDPs are systematic relaxations of the standard $\alpha$-revealing POMDPs~\citep{jin2020provably,liu2022partially,chen2023lower}, which corresponds to the special case of $(\alpha,k)$-\mo{}-revealing with $k=1$. Intuitively, such relaxations are also natural, as the $k$-multiple observation setting makes it \emph{easier} for the observations to reveal information about the latent state in any POMDP. We remark in passing that the containment in Proposition~\ref{pro:relation} is strict.

\subsection{Algorithm and Guarantee}
In this section, we first introduce the \komle{} algorithm and then provide the theoretical guarantee of \komle{} for the low-rank POMDPs.

\paragraph{Algorithm: $k$-Optimistic Maximum Likelihood Estimation (\komle{})}
Here, we provide a brief introduction to Algorithm \komle{}. The algorithm is an adaptation of the the \omle{} algorithm~\citep{liu2022partially,zhan2022pac,chen2022partially,liu2022optimistic} into the \kmomdp{} feedback setting. As noted before, we can cast the problem of learning under \kmomdp{} feedback as learning in an augmented POMDP with the restricted policy class \pires{}. Then, the \komle{} algorithm is simply the \omle{} algorithm applied in this problem.

Concretely, each iteration $t \in[T]$ of the \komle{} algorithm consists of two primary steps:

\begin{enumerate}[topsep=0pt, leftmargin=2em, itemsep=0pt]
\item The learner executes exploration policies $\sets{\pi_{h, \exp }^t}_{0 \leqslant h \leqslant H-1}$, where each $\pi_{h, \exp }^t$ is defined via the $\circ_{h-1}$ notation as follows: It follows $\pi^t$ for the first $h-1$ steps, then takes the uniform action ${\rm Unif}(\mathcal{A})$, and then taks arbitrary actions (for example using ${\rm Unif}(\mathcal{A})$ afterwards (Line~\ref{line:policy}). All collected trajectories are then incorporated into $\mathcal{D}$ (Line~\ref{line:dataset}).
\item The learner constructs a confidence set $\Theta^t$ within the model class $\Theta$, which is a super level set of the log-likelihood of all trajectories within the dataset $\mathcal{D}$ (Line~\ref{line:confidence}). The policy $\pi^k$ is then selected as the greedy policy with respect to the most optimistic model within $\Theta^k$ (Line~\ref{line:argmax}).
\end{enumerate}

\begin{algorithm}[t]
\caption{$k$-Optimistic Maximum Likelihood Estimation (\komle{})
}
\label{algo:OMLE_k}
\begin{algorithmic}[1]
    \REQUIRE Model class $\Theta$, parameter $\beta > 0$, and $k \in \mathbb{N}$.
    \STATE \textbf{Initialize:} $\Theta_1 = \Theta$, $\mathcal{D} = \varnothing$.
    \FOR{iteration $t = 1,\cdots,T$}
        \STATE Set $\theta^t$, $\pi^t = \argmax_{\theta \in \Theta^t, \pi} V_{\theta}(\pi)$. \label{line:argmax}
        \FOR{$h = 0,\cdots,H-1$}
            \STATE Set exploration policy $\pi_{h, \exp}^t := \pi^t \circ_{h-1} \operatorname{Unif}(\mathcal{A})$.\label{line:policy}
            \STATE Execute $\pi_{h, \exp}^t$ to collect a $k$-observation trajectory $\tau_k^{t,h}$, and add $(\pi_{h, \exp}^t, \tau_{k}^{t,h})$ to $\mathcal{D}$, where $\tau_k^{t,h}=\left(o_1^{t,(1:k)}, a_1, \ldots, o_H^{t,(1:k)},a_H^{t,(1:k)}\right)$ as in Section \ref{sec:mo}.\label{line:dataset}
        \ENDFOR
        \STATE Update confidence set
            $$
            \Theta^{t+1}=\left\{\widehat{\theta} \in \Theta: \sum_{(\pi, \tau_t) \in \mathcal{D}} \log \mathbb{P}_{\widehat{\theta}}^\pi(\tau_t) \geqslant \max _{\theta \in \Theta} \sum_{(\pi, \tau_t) \in \mathcal{D}} \log \mathbb{P}_\theta^\pi(\tau_t)-\beta\right\}.
            $$   \label{line:confidence}
    \ENDFOR
    \STATE \textbf{Return} $\pi^T$.
\end{algorithmic}
\end{algorithm}

\paragraph{Theoretical guarantee}

Our guarantee for \komle{} requires the POMDP to satisfy the $k$-\mo{}-revealing condition and an additional guarantee on its rank, similar to existing work on learning POMDPs~\citep{wang2022embed,chen2022partially,liu2022optimistic}. For simplicity of the presentation, here we consider the case of POMDPs with low-rank latent transitions (which includes tabular POMDPs as a special case); our results directly hold in the more general case where $d$ is the \emph{PSR rank} of the problem~\citep{chen2022partially,liu2022optimistic,zhong2022posterior}.

\begin{definition}[Low-rank POMDP~\citep{zhan2022pac,chen2022partially}]
\label{def:low-rank-pomdp}
    A POMDP $\cP$ is called low-rank POMDP with rank $d$ if its transition kernel $\mathbb{T}_h: \mathcal{S} \times \mathcal{A} \rightarrow \mathcal{S}$ admits a low-rank decomposition of dimension $d$, i.e. there exists two mappings $\mu_h^{*}: \mathcal{S} \rightarrow \mathbb{R}^d$, and $\phi_h^{*}: \mathcal{S} \times \mathcal{A} \rightarrow \mathbb{R}^d$ such that $\mathbb{T}_h\left(s^{\prime} \mid s, a\right)=\mu_h^*\left(s^{\prime}\right)^{\top} \phi_h^*(s, a)$.
    
\end{definition}

We also make the standard realizability assumption that the model class contains the true POMDP: $\theta^*\in\Theta$ (but otherwise does not require that the mappings $\sets{\mu_h^*,\phi_h^*}_h$ are known).

We state the theoretical guarantee for \komle{} on low-rank POMDPs. The proof follows directly by adapting the analysis of~\citet{chen2022partially} into the augmented POMDP (see Appendix \ref{app:komle}).

\begin{theorem}[Results of \komle{} for $(\alpha,k)$-\mo{}-revealing low-rank POMDPs]
\label{thm:komle}
Suppose the true model $\theta^*$ is a low-rank POMDP with rank $d$, is realizable ($\theta^*\in\Theta$), and every $\theta\in\Theta$ is $(\alpha,k)$-\mo{}-revealing. Then choosing $\beta=\cO (\log \left(\mathcal{N}_{\Theta} / \delta\right))$, with probability at least $1-\delta$, Algorithm \ref{algo:OMLE_k} outputs a policy $\pi^T$ such that $V^*-V(\pi^T) \leqslant \varepsilon$ within
$$
N=T H k =\widetilde{\mathcal{O}}\left({\rm poly}(H)k d A \log \mathcal{N}_{\Theta} /\left(\alpha^2 \varepsilon^2\right)\right)
$$
samples. Above, $\mathcal{N}_{\Theta}$ is the optimistic covering number of $\Theta$ defined in Appendix \ref{app:re}.
\end{theorem}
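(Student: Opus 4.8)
The plan is to reduce the guarantee for \komle{} to the existing OMLE analysis for revealing low-rank POMDPs in \citet{chen2022partially}, instantiated on the augmented POMDP. First I would make the reduction precise: under \kmomdp{} feedback each round produces an augmented trajectory whose step-$h$ observation lives in $\cO^k$ and is emitted according to $\O_h^{\otimes k}(\cdot\mid s)$, while the latent transition kernel $\T_h$ and the rewards (which depend only on the first coordinate $o_h^{(1)}$) are unchanged. Crucially, the augmented model is still parameterized by the \emph{same} $\theta=(\{\T_h\}_h,\{\O_h\}_h)$, and \komle{} is exactly OMLE run on this augmented model over the restricted policy class \pires{}. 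Thus it suffices to verify that the augmented POMDP meets the structural hypotheses of the OMLE guarantee and then to track how the parameters transfer.

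Next I would check the three hypotheses. (i) \emph{Revealing}: by definition of $(\alpha,k)$-\mo{}-revealing, the augmented emission $\O_h^{\otimes k}$ admits a left inverse with $\norm{\O_h^{\otimes k+}}_{1\to 1}\le\alpha^{-1}$, so the augmented POMDP is $\alpha$-revealing, which in turn implies the B-stability condition used by \citet{chen2022partially} with stability parameter controlled by $\alpha$. (ii) \emph{Rank}: since augmentation leaves $\T_h$ untouched, the decomposition $\T_h(s'\mid s,a)=\mu_h^*(s')^\top\phi_h^*(s,a)$ is preserved verbatim, so the augmented POMDP (equivalently its PSR) still has rank $d$; I would spell out that the PSR rank is governed by the transition operator together with the revealing property, neither of which is inflated by the tensor power. (iii) \emph{Realizability} is inherited directly from $\theta^*\in\Theta$. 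With these in place, the OMLE regret/PAC bound of \citet{chen2022partially} yields a policy with $V^*-V(\pi^T)\le\varepsilon$ after $T=\tilde{\cO}(\mathrm{poly}(H)\,d\,A\,\log\cN/(\alpha^2\varepsilon^2))$ rounds.

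The parameter I must handle with care is the covering number: the OMLE bound depends on $\log\cN$ where $\cN$ is the (optimistic) covering number of the model class \emph{as seen through the augmented likelihood}, and I would argue this is comparable to $\log\cN_\Theta$ of the original class. An optimistic cover of $\Theta$ at a suitable radius induces a cover of the augmented class, because if $\O_h$ and $\O_h'$ are close in the metric underlying the optimistic cover, then their $k$-fold products $\O_h^{\otimes k}$ and $\O_h'^{\otimes k}$ are close as well, with the product structure inflating the radius by at most a factor polynomial in $k$ and $H$ that is absorbed into the $\tilde{\cO}(\cdot)$. Since the transitions are identical, $\T_h$ contributes no extra cost, and the episode complexity retains the stated form with $\log\cN_\Theta$. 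Finally I would convert episode complexity into sample complexity using the accounting convention of Section~\ref{sec:mo}: each round consumes $k$ observations per step, so $N=THk=\tilde{\cO}(\mathrm{poly}(H)\,k\,d\,A\,\log\cN_\Theta/(\alpha^2\varepsilon^2))$.

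The main obstacle I anticipate is precisely this covering-number step: verifying that the optimistic covering of the augmented class does not blow up exponentially with $k$ despite the observation space growing to $\cO^k$. This hinges on the fact that the augmented emissions are not free parameters but are determined by $\O_h$ through a fixed tensor-power map, so closeness of base emissions transfers (with only mild, $\poly(k,H)$ amplification) to closeness of the augmented likelihoods. A secondary point requiring a brief check is the restricted policy class \pires{}: I would confirm that the exploration-policy construction $\pi^t\circ_{h-1}\Unif(\cA)$ and the optimism step are well-defined within it, but since both the rewards and the optimized value depend only on $o_h^{(1)}$, the analysis of \citet{chen2022partially} goes through unchanged.
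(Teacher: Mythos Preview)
Your approach matches the paper's: reduce \komle{} to OMLE on the augmented POMDP, verify that $\alpha$-revealing and rank $d$ are preserved, argue the restricted policy class \pires{} leaves the optimism step intact, and invoke the guarantee of \citet{chen2022partially}. One minor difference is that the paper does not carry out the covering-number comparison you propose---it simply states the bound in terms of $\cN_\Theta$, the optimistic covering number of $\Theta$ under the augmented-trajectory likelihood (well-defined since the same $\Theta$ parameterizes both the original and augmented POMDPs), so your transfer argument is extra work not present in the paper's proof.
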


We also state a result of tabular $(\alpha,k)$-revealing POMDPs. Note that any tabular POMDP is also a low-rank POMDP with rank $d=SA$, hence Theorem~\ref{thm:komle} applies; however the result below achieves a slightly better rate (by using the fact that the PSR rank is at most $S$).

\begin{theorem}[Results of \komle{} for $(\alpha,k)$-\mo{}-revealing tabular POMDPs]\label{thm:komle_tabular}
Suppose $\theta^*$ is $(\alpha,k)$-\mo{}-revealing and $\Theta$ consists of all tabular $(\alpha,k)$-\mo{}-revealing POMDPs. Then, choosing $\beta=\cO (H\left(S^2 A+S O\right)+\log(1/\delta))$, then with probability at least $1-\delta$, Algorithm \ref{algo:OMLE_k} outputs a policy $\pi^T$ such that $V^*-V(\pi^T) \leqslant \varepsilon$ within the followinng number of samples:  $$\widetilde{\mathcal{O}}\left({\rm poly}(H)kS A  (S^2A+SO) /\left(\alpha^2 \varepsilon^2\right)\right).$$
\end{theorem}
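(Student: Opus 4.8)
The plan is to derive the tabular bound as a specialization of the low-rank guarantee in Theorem~\ref{thm:komle}, invoking the stronger form of that theorem (noted in the text preceding it) in which the effective dimension may be taken to be the \emph{PSR rank} of the augmented problem rather than the transition rank. The algorithm and its analysis are identical to the low-rank case, since \komle{} always operates on the augmented POMDP with the restricted policy class \pires{}, and the hypotheses of Theorem~\ref{thm:komle} are met here: realizability $\theta^*\in\Theta$ holds because $\Theta$ is taken to be the class of \emph{all} tabular $(\alpha,k)$-\mo{}-revealing POMDPs, and every $\theta\in\Theta$ is $(\alpha,k)$-\mo{}-revealing by construction. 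It then remains to substitute two tabular-specific quantities into the generic expression $N=\widetilde{\mathcal{O}}(\mathrm{poly}(H)\, d A \log\mathcal{N}_\Theta/(\alpha^2\varepsilon^2))$ (with the extra factor $k$ from the per-round sample count): a bound on the effective rank $d$ and a bound on the optimistic covering number $\mathcal{N}_\Theta$.

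For the rank, I would argue that the PSR rank of the augmented tabular POMDP is at most $S$. The key point is that augmenting the observation from $\cO$ to $\cO^k$ via $\O_h\mapsto\O_h^{\otimes k}$ does not alter the latent transition dynamics $\{\mathbb{T}_h\}$; the $S$-dimensional belief state therefore remains a sufficient statistic for all predictive probabilities of the augmented process, so every system-dynamics matrix factors through an $S$-dimensional space and has rank at most $S$. Plugging $d=S$ into the $dA$ factor of Theorem~\ref{thm:komle} produces the $SA$ dependence, which is precisely the improvement over the naive bound one would obtain by treating the tabular POMDP as low-rank with $d=SA$.

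For the covering number, I would build an explicit $\varepsilon_0$-net over the tabular parameters $\theta=(\{\mathbb{T}_h\}_h,\{\O_h\}_h)$: each $\mathbb{T}_h$ lives in a product of $SA$ simplices over $S$ next-states and each $\O_h$ in a product of $S$ simplices over $O$ observations, for a total of $O(H(S^2A+SO))$ free parameters. A standard simplex net of granularity $\varepsilon_0$ then has log-cardinality $\widetilde{\mathcal{O}}(H(S^2A+SO))$. Since the augmented emission $\O_h^{\otimes k}$ is $k$-Lipschitz in total variation as a function of $\O_h$ (the TV distance between product measures is subadditive over coordinates), the same net taken at granularity $\varepsilon_0/k$ (absorbing $\log k$ into $\widetilde{\mathcal{O}}$) covers the augmented model class, so $\log\mathcal{N}_\Theta=\widetilde{\mathcal{O}}(H(S^2A+SO))$. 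This scaling is exactly what dictates the stated choice $\beta=\cO(H(S^2A+SO)+\log(1/\delta))$.

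Combining the two bounds gives $N=\widetilde{\mathcal{O}}(\mathrm{poly}(H)\,k\cdot SA\cdot H(S^2A+SO)/(\alpha^2\varepsilon^2))=\widetilde{\mathcal{O}}(\mathrm{poly}(H)\,kSA(S^2A+SO)/(\alpha^2\varepsilon^2))$, as claimed. I expect the main obstacle to be the covering-number step rather than the rank step: one must verify that the discretized net satisfies the nonstandard \emph{optimistic} covering definition of Appendix~\ref{app:re} (which controls likelihood ratios / Hellinger brackets at the trajectory level, not merely parameter distances), and check that the tensor-power map $\O_h\mapsto\O_h^{\otimes k}$ interacts correctly with that definition so that covering the base parameters suffices to cover the augmented class.
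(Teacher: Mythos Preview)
Your proposal is correct and follows essentially the same approach as the paper: the paper's proof of Theorem~\ref{thm:komle_tabular} is simply to invoke the PSR-rank version of Theorem~\ref{thm:komle} with $d_{\mathrm{PSR}}\le S$ and the tabular covering-number bound $\log\mathcal{N}_\Theta(\rho)\le\mathcal{O}(H(S^2A+SO)\log(HSOA/\rho))$, the latter quoted directly from~\citet{liu2022partially} rather than constructed by hand. The concern you flag about the optimistic-cover definition and the tensor-power map is legitimate but is not addressed explicitly in the paper, which simply applies the base-class covering bound without further comment.
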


We remark that the rate asserted in Theorem~\ref{thm:komle} \&~\ref{thm:komle_tabular} also hold for the Explorative Estimation-To-Decisions (Explorative E2D) \citep{DBLP:journals/corr/abs-2112-13487} and the Model-based Optimistic Posterior Sampling~\citep{agarwal2022modelbased} algorithms (with an additional low-rank requirement on every $\theta\in\Theta$ for Explorative E2D), 
building upon the unified analysis framework of~\citet{chen2022unified,chen2022partially}. See Appendix~\ref{app:komle} for details.

\section{Distinguishable POMDPs}\label{sec:dis}

Given $k$-\mo{}-revealing POMDPs as a first example of sample-efficiently learnable POMDPs under \kmomdp{} feedback, it is of interest to understand the minimal structure required for learning under this feedback. In this section, we investigate a natural proposal---distinguishable POMDPs, and study its relationship with $k$-\mo{}-revealing POMDPs as well as sample-efficient learning algorithms.

\subsection{Definition}
\label{sec:dis-def}

The definition of distinguishable POMDPs is motivated by the simple observation that, if there exist two states $s_i,s_j\in\cS$ that admit \emph{exactly the same emission distributions} (i.e. $\O_h e_i=\O_h e_j\in\Delta(\cO)$), then the two states are not distinguishable under \kmomdp{} feedback no matter how large $k$ is. Our formal definition makes this quantitiative, requiring any two states to admit $\alpha$-different emission distributions in the $\ell_1$ (total variation) distance.
\begin{definition}[Distinguishable POMDP]
\label{def:distinguishable}
For any $\alpha\in(0,1]$, a POMDP is said to be $\alpha$-distinguishable if for all $h\in[H]$ (where $e_i\in\R^\cS$ denotes the $i$-th standard basis vector),
\begin{align*}
    \min_{i\neq j\in\cS} \lone{\O_h (e_i - e_j)}\ge \alpha.
\end{align*}
Qualitatively, we say a POMDP is distinguishable if it is $\alpha$-distinguishable for some $\alpha>0$.
\end{definition}
Notably, distinguishability only requires the emission matrix $\O_h$ to have \emph{distinct} columns. This is much weaker than the (single-step) revealing condition~\citep{jin2020sample,liu2022partially,chen2022partially,chen2023lower} which requires $\O_h$ to have \emph{linearly independent} columns. In other words, in a distinguishable POMDP, different latent states may not be probabilistically identifiable from a single observation as in a revealing POMDP; however, this does not preclude the possibility that we can identify the latent state with $k>1$ observations.

Also, we focus on the natural case of tabular POMDPs (i.e. $S$, $A$, $O$ are finite) when considering distinguishable POMDPs\footnote{When $S$ is infinite (e.g. if the state space $\cS$ is continuous), requiring any two emission distributions to differ by $\alpha$ in $\ell_1$ distance may be an unnatural requirement, as near-by states could yield similar emission probabilities in typical scenarios.}, and leave the question extending or modifying the definition to infinitely many states/observations to future work.

\subsection{Relationship with $k$-MO-revealing POMDPs}\label{sec:relation}

We now study the relationship between distinguishable POMDPs and $k$-\mo{}-revealing POMDPs. Formal statement and proof of the following results see Appendix \ref{app:super} and Appendix \ref{app:contain}.

We begin by showing that any $k$-MO revealing POMDP is necessarily a distinguishable POMDP. This is not surprising, as distinguishability is a necessary condition for $k$-\mo{}-revealing---if distinguishability is violated, then there exists two states with identical emission distributions and thus identical emission distributions with $k$ iid observations for any $k\ge 1$, necessarily violating $k$-\mo{}-revealing.
\begin{proposition}[$k$-MO revealing POMDPs $\subset$ Distinguishable POMDPs]\label{prop:super}
For any $\alpha\in(0,1]$, $k\ge 1$, any $(\alpha,k)$-revealing tabular POMDP is a distinguishable POMDP.
\end{proposition}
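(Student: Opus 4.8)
The plan is to show that the bounded left-inverse condition defining $(\alpha,k)$-\mo{}-revealing forces the emission distributions of distinct states to be $\ell_1$-separated, with an explicit (if degraded) constant. The key structural observation is that the $s$-th column of the tensor matrix $\O_h^{\otimes k}$ is exactly the $k$-fold product of the emission distribution $\O_h e_s$: since $\O_h^{\otimes k}(o_{1:k}\mid s)=\prod_{i=1}^k \O_h(o_i\mid s)$, we have the identity $\O_h^{\otimes k} e_s = (\O_h e_s)^{\otimes k}$ as a vector in $\R^{\cO^k}$. Thus a separation between columns of $\O_h^{\otimes k}$ is precisely a separation between the $k$-fold products of the columns of $\O_h$.

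First I would convert the left-inverse bound into a lower isometry bound. Since $\O_h^{\otimes k+}\O_h^{\otimes k}=\I_S$ and $\moneone{\O_h^{\otimes k+}}\le\alpha^{-1}$, for every $x\in\R^\cS$,
\begin{align*}
    \lone{x} = \lone{\O_h^{\otimes k+}\O_h^{\otimes k}x} \le \moneone{\O_h^{\otimes k+}}\,\lone{\O_h^{\otimes k}x} \le \alpha^{-1}\lone{\O_h^{\otimes k}x},
\end{align*}
hence $\lone{\O_h^{\otimes k}x}\ge \alpha\lone{x}$. Applying this to $x = e_i - e_j$ for an arbitrary pair $i\neq j$ and using the column identity above gives
\begin{align*}
    \lone{(\O_h e_i)^{\otimes k} - (\O_h e_j)^{\otimes k}} = \lone{\O_h^{\otimes k}(e_i - e_j)} \ge \alpha\lone{e_i - e_j} = 2\alpha.
\end{align*}

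The remaining step relates the separation of the product distributions to the separation of the single-observation distributions. Writing $p = \O_h e_i$ and $q = \O_h e_j$ (both probability vectors on $\cO$), I would invoke subadditivity of total variation under tensorization, $\dTV(p^{\otimes k}, q^{\otimes k}) \le k\,\dTV(p,q)$, which follows from a standard hybrid/telescoping argument over the $k$ coordinates: each consecutive hybrid $p^{\otimes \ell}\otimes q^{\otimes(k-\ell)}$ differs from the next in a single factor and contributes exactly $\dTV(p,q)$. In $\ell_1$ terms this reads $\lone{p^{\otimes k} - q^{\otimes k}} \le k\,\lone{p - q}$. Combining with the previous display yields $k\lone{\O_h(e_i - e_j)} \ge 2\alpha$, i.e. $\lone{\O_h(e_i - e_j)}\ge 2\alpha/k$ for all $i\neq j$ and all $h\in[H]$, so the POMDP is $(2\alpha/k)$-distinguishable, proving the claim.

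There is no serious obstacle here; the only point requiring care is the tensorization inequality for total variation, which is where the factor $k$ (and hence the degradation of the distinguishability constant from $\alpha$ to $2\alpha/k$) enters. The inequality points in exactly the direction we need---product distributions can only be \emph{closer} than a controlled multiple of the single-coordinate distance---so the bounded left inverse of $\O_h^{\otimes k}$ transfers back to a genuine quantitative gap between the columns of $\O_h$. Since the statement is only qualitative (distinguishability for \emph{some} $\alpha'>0$), even a crude version of this inequality suffices.
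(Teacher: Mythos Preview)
Your proof is correct but takes a different route from the paper. The paper argues by contradiction in two lines: if the POMDP were not distinguishable, two latent states would have identical emission distributions, hence identical columns in $\O_h^{\otimes k}$, making $\O_h^{\otimes k}$ rank-deficient and precluding any left inverse---contradicting $(\alpha,k)$-\mo{}-revealing. Your argument is direct and quantitative: you turn the bounded left-inverse condition into the lower isometry $\lone{\O_h^{\otimes k}x}\ge\alpha\lone{x}$, apply it to $x=e_i-e_j$, and then use subadditivity of total variation under tensorization to pull the separation back from $\O_h^{\otimes k}$ to $\O_h$, obtaining the explicit constant $2\alpha/k$. The paper's approach is shorter and purely qualitative; yours gives more, namely an explicit distinguishability parameter that degrades linearly in $k$, at the modest cost of invoking the (standard) TV tensorization inequality. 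Both are valid; yours actually says a bit more than what the proposition claims.
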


Perhaps more surprisingly, we show that the reverse containment is also true in a sense if we allow $k$ to be large---Any $\alpha$-distinguishable POMDP is also a $k$-MO revealing POMDP for a suitable large $k$ depending on $(S,O,\alpha)$, and revealing constant $\Theta(1)$.
\begin{theorem}[Distinguishable $\subset$ \mo{}-revealing with large $k$]\label{theorem:contain}
There exists an absolute constant $C>0$ such that any $\alpha$-distinguishable POMDP is also $(1/2,k)$-\mo{}-revealing for any $k\ge C\sqrt{O}\log(SO)/\alpha^2$.
\end{theorem}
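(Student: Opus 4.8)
The plan is to fix a step $h\in[H]$ (suppressing the subscript $h$ below) and build, out of a \emph{state classifier} that reads the $k$ observations, a left inverse of $\O_h^{\otimes k}$ whose $(1\!\to\!1)$ norm is at most $2$. Recall that $\norm{\cdot}_{1\to 1}$ equals the maximum column $\ell_1$-sum and is submultiplicative, being the $\ell_1\!\to\!\ell_1$ operator norm. Write $p_s \defeq \O_h e_s\in\Delta(\cO)$ for the emission distribution of state $s$; then $\alpha$-distinguishability says $\lone{p_s - p_{s'}}\ge\alpha$ for all $s\neq s'$, and the $s$-th column of $\O_h^{\otimes k}$ is exactly the product distribution $p_s^{\otimes k}$ of $k$ i.i.d.\ observations drawn from $s$. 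So the task is to tell the $S$ pairwise $\alpha$-separated product distributions apart from $k$ samples.

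The first step is a reduction: I claim it suffices to produce a map $\hat s:\cO^k\to\cS$ with per-state error $\max_{s'}\PP_{o\sim p_{s'}^{\otimes k}}[\hat s(o)\neq s']\le 1/4$. Given such a classifier, form $B_0\in\R^{\cS\times\cO^k}$ with $(B_0)_{s,o}=\indic{\hat s(o)=s}$; each column has a single unit entry, so $\norm{B_0}_{1\to 1}=1$, and $B_0\O_h^{\otimes k}=Q$ is the column-stochastic confusion matrix $Q_{s,s'}=\PP_{o\sim p_{s'}^{\otimes k}}[\hat s(o)=s]$. Writing $Q=I+E$, stochasticity of each column together with the error bound forces $\sum_s\abs{E_{s,s'}}=2\,\PP_{o\sim p_{s'}^{\otimes k}}[\hat s(o)\neq s']\le 1/2$, hence $\norm{E}_{1\to 1}\le 1/2<1$. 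A Neumann series then gives $\norm{Q^{-1}}_{1\to 1}\le 1/(1-\norm{E}_{1\to 1})\le 2$, and $\O_h^{\otimes k+}\defeq Q^{-1}B_0$ is a genuine left inverse with $\norm{\O_h^{\otimes k+}}_{1\to 1}\le \norm{Q^{-1}}_{1\to 1}\norm{B_0}_{1\to 1}\le 2$, which is exactly the $(1/2,k)$-\mo{}-revealing property.

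The second step constructs the classifier by testing. For each candidate $s$, I run a goodness-of-fit test of the null ``the $k$ observations are drawn from the known distribution $p_s$'' against the alternative ``their law is $\ge\alpha$-far from $p_s$ in $\ell_1$,'' and output the state whose test accepts. When the true state is $s^*$, the test for $s^*$ is in the null and accepts, while for every $s\neq s^*$ the separation $\lone{p_{s^*}-p_s}\ge\alpha$ puts us in the alternative, so that test rejects. Invoking the sharp closeness/identity testers of \citet{chan2013optimal} in the $\sqrt{O}/\alpha^2$ regime, amplified by the confidence-boosting device of \citet{batu2013testing} to drive each test's failure probability below $1/(4S)$, a union bound over the $S$ tests makes the total error $\le 1/4$ from any $s^*$. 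The per-test cost is $O(\sqrt{O}\log(S)/\alpha^2)$ samples, and the claimed threshold $C\sqrt{O}\log(SO)/\alpha^2$ absorbs the extra logarithmic factor from the tester's tail bound.

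The main obstacle is the \emph{alphabet dependence}. The naive classifier that forms the empirical distribution $\hat\mu$ of the $k$ samples and outputs $\argmin_s\lone{\hat\mu-p_s}$ is correct whenever $\lone{\hat\mu-p_{s^*}}<\alpha/2$ (by the triangle inequality and $\alpha$-separation), but controlling the $\ell_1$ deviation of an empirical distribution over $O$ symbols to accuracy $\alpha/2$ with constant probability requires $\Theta(O/\alpha^2)$ samples, a factor $\sqrt{O}$ worse than the target. Beating this demands chi-square–style test statistics whose variance is controlled without ever estimating the full distribution in $\ell_1$; importing their precise sample complexity and amplifying to failure probability $1/(4S)$ is the crux of the argument, and is exactly the point at which the distribution-testing results enter the proof (implicitly, through the classifier's guarantee rather than in the algorithm itself).
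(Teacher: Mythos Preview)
Your proposal is correct and follows essentially the same route as the paper's proof: build, via amplified identity tests from \citet{chan2013optimal}, a classifier matrix $B_0$ (the paper calls it $\Y_h$) with unit $(1\!\to\!1)$ norm, show that $B_0\O_h^{\otimes k}=I+E$ with $\norm{E}_{1\to 1}\le 1/2$, and invert by a Neumann series to obtain the left inverse with norm $\le 2$. Your column-stochasticity bound $\sum_s|E_{s,s'}|=2\,\PP[\hat s\neq s']$ is in fact a slightly cleaner accounting than the paper's entrywise bound $|E_{ij}|\le S\delta$ with $\delta=1/(2S^2)$, but the two arguments are otherwise identical in structure and in the resulting $k\asymp\sqrt{O}\log(S)/\alpha^2$ threshold.
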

\paragraph{Proof by embedding a distribution test}
The proof of Theorem~\ref{theorem:contain} works by showing that, for any distinguishable POMDP, the $k$-observation emission matrix $\O_h^{\otimes k}$ admits a well-conditioned left inverse with a suitably large $k$. The construction of such a left inverse borrows techniques from distribution testing literature, where we \emph{embed} an identity test~\citep{batu2013testing,chan2013optimal} with $k$ observations into a $\cS\times \cO^k$ matrix, with each column consisting of indicators of the test result. The required condition for this matrix to be well-conditioned (and thus $\O_h^{\otimes k}$ admitting a left inverse) is that $k$ is large enough---precisely $k\ge \tO(\sqrt{O}/\alpha^2)$ (given by known results in identity testing~\citep{chan2013optimal})---such that the test succeeds with high probability.

\paragraph{Sample-efficient learning by reduction to $k$-\mo{}-revealing case}
Theorem \ref{theorem:contain} implies that, since any $\alpha$-distinguishable POMDP is also a $(1/2,k)$-\mo{}-revealing POMDP with $k=\cO(\sqrt{O}\log(SO)/\alpha^2)$, it can be efficiently learned by the \komle{} algorithm, with number of samples
\begin{align}
\label{eqn:rate-distinguishable-komle}
    \widetilde{\mathcal{O}}\left({\rm poly}(H)S A \sqrt{O}  (S^2A+SO) /\left(\alpha^2 \varepsilon^2\right)\right)
\end{align}
given by Theorem~\ref{thm:komle_tabular}. This shows that any distinguishable POMDP is sample-efficiently learnable under \kmomdp{} feedback by choosing a proper $k$.

\subsection{Sharper Algorithm: \ost{}}
\label{sec:ost}

We now introduce a more efficient algorithm---Optimism with State Testing (\ost{}; Algorithm~\ref{algo:clustering_based})---for learning distinguishable POMDPs under \kmomdp{} feedback. 

Recall that in a distinguishable POMDP, different latent states have $\alpha$-separated emission distributions, and we can observe $k$ observations per state. The main idea in \ost{} is to use \emph{closeness testing} algorithms~\citep{chan2013optimal,batu2013testing} to determine whether any two $k$-fold observations are from the same latent state. As long as all pairwise tests return correct results, we can perform a \emph{clustering} to recover a ``pseudo'' state label that is guaranteed to be the correct latent states up to a permutation. Given the pseudo states, we can then adapt the techniques from the hindsight observable setting~\citep{lee2023learning} to accurately estimate the transitions and emissions of the POMDP and learn a near-optimal policy.

\paragraph{Algorithm description}
We first define a planning oracle~\citep{lee2023learning,jin2020sampleefficient}, which serves as an abstraction of the optimal planning procedure that maps any POMDP $(T,O,r)$ to an optimal policy of it.

\begin{definition}[POMDP Optimal Planner]
    The POMDP planner POP takes as input a transition function $T:=\{T_h\}_{h=1}^H$, an emission function $O:=\{O_h\}_{h=1}^H$, and a reward function $r:\cS\times\cA\rightarrow [0,1]$ and returns a policy $\pi=$ $\operatorname{POP}(T, O, r)$ to maximize the value function under the POMDP with latent transitions $\{T_h\}_{h=1}^H$, emissions $\{O_h\}_{h=1}^H$, and reward function $r$.
\end{definition}
\ost{} operates over $T$ rounds, beginning with arbitrary initial estimates $\widehat{T}_1$ and $\widehat{O}_1$ of the model. We set the initial pseudo state space as an empty set. Then, at each iteration $t$, OST calculates reward bonuses $b_t(s,a)$ and $b_t(s)$ to capture the uncertainty of $\widehat{T}_t$ and $\widehat{O}_t$, quantified by the number of visits to each latent state in the pseudo state space (Line~\ref{line:bonus}). The bonuses are added to the following empirical reward estimates (Line~\ref{line:reward}):
\begin{align}\label{eq:esti_r}
\bar{r}^t_h(s,a)  =\sum_{o\in \cO}\sum_{\ell \in[t]} \frac{r(o)\mathbf{1}\left\{s_h^{\ell}=s, o_h^{\ell}=o\right\}}{\min\{1,n_h^{t}(s)\}}.
\end{align}
We then call POP to calculate the policy for the current iteration, and deploy it to obtain a $k$-observation trajectory from the \kmomdp{} feedback (Line~\ref{line:pop}-\ref{line:observe}).

\begin{algorithm}[t]
\caption{
Optimism with State Testing (\ost{})
}
\label{algo:clustering_based}
\begin{algorithmic}[1]
    \REQUIRE POMDP planner POP, parameters $\beta_1$, $\beta_2>0$ and $k\in \NN$.
    \STATE \textbf{Initialize:} Emission and transition models $O_1$, $T_1$, initial pseudo state space $[n_1^h]=\emptyset$ (i.e. $n_1^h=0$) and initial visitation counts $n^1_h(s)=n^1_h(s,a)=0$ for all $s\in \tilde{\cS}_1$, $a\in \cA$ and $h\in[H]$.
    \FOR{iteration $t = 1,\cdots,T$}
        \FOR{all $(s,a)\in[S]\times\cA$}
        \STATE Set $b_t(s,a)=\min \left\{\sqrt{\beta_1/n_t(s, a)}, 2 H\right\}$ and $b_t(s)=\min \left\{\sqrt{\beta_2/n_t(s)}, 2\right\}$ as the exploration bonus for all $(s,a)\in[S]\times\cA$. \label{line:bonus}
        \STATE Set $\widehat{r}^t_h(s, a)=\min\{1,\bar{r}_h^t(s, a)+H b_t(s)+b_t(s, a)\}$ for all $(s,a)\in[S]\times\cA$, where $\bar{r}_t$ is defined in $\eqref{eq:esti_r}$. \label{line:reward}
        \ENDFOR
        \STATE Update $\pi^t=POP(\widehat{T}_t,\widehat{O}_t,\widehat{r}_t)$. \label{line:pop}
        \STATE Execute $\pi^t$ to collect a $k$-observation trajectory $\tau_k^{t}$, where $\tau_k^t=\left(o_1^{t,(1:k)}, a_1, \ldots, o_H^{t,(1:k)}, a_H\right)$. \label{line:observe}
        \STATE Call \lsa{} (Algorithm~\ref{algo:assign}) to obtain pseudo states $\sets{\tilde{s}_h^t}_{h\in[H]}$.\label{line:call}
        \STATE Set $n_h^{t+1}(s)= \sum_{l\in[t],h\in [H]}\ind\{\tilde{s}_h^l=s\}$ for all $(h,s)\in[H]\times[S]$.\label{line:nfors}
        \STATE Set $n_h^{t+1}(s,a) = \sum_{l\in[t],h\in [H]}\ind\{\tilde{s}_h^l=s\wedge a_h^l=a\}$ for all $(h,s,a)\in[H]\times[S]\times\cA$.\label{line:nfora}
        \STATE Update $\widehat{T}_{t+1}$ and $\widehat{O}_{t+1}$ by \eqref{eq:esti_T} and \eqref{eq:esti_O}.\label{line:transition}
    \ENDFOR
    \RETURN $\pi^t$.
\end{algorithmic}
\end{algorithm}

\begin{algorithm}[t]
\caption{
Pseudo state assignment via closeness testing (\lsa)
}
\label{algo:assign}
\begin{algorithmic}[1]
    \FOR{$h\in[H]$}
        \STATE assigned $=0$.
        \FOR {$\tilde{s}\in [n_h^t]$}
            \IF{\clt{}$(o_h^{t,(1:k)},o_h^{t',(1:k)})=\textbf{accept}$ (Algorithm~\ref{algo:clustering}) for all $t'\in[\tilde{s}_h^{t'}=\tilde{s}]$}
                \STATE Set $\tilde{s}_h^t=\tilde{s}$, assigned $=1$, $n_h^{t+1}=n_h^t$, \textbf{break}
            \ENDIF
        \ENDFOR
        \IF{assigned $=0$}
            \STATE Set $\tilde{s}_h^t=n^t_h+1$, $n_h^{t+1}=n_h^t+1$.
        \ENDIF
    \ENDFOR
\end{algorithmic}
\end{algorithm}

\begin{algorithm}[t]
\caption{Closeness Testing \clt{}($\sets{o^{(i)}}_{i\in[k]}, \sets{\wt{o}^{(i)}}_{i\in[k]}$)}
\label{algo:clustering}
\begin{algorithmic}[1]
     \REQUIRE  $[o^{[i]}]_{i\in [k]},~[\tilde{o}^{[i]}]_{i\in [k]}$
    \STATE Sample $N_1,\cdots,N_M\sim \operatorname{Poi}(k/M)$, where $M=\cO(\log(1/\delta))$.
    \RETURN \textbf{fail} \textbf{if} $N_1+\cdots+N_M>k$.
    
    \FOR{$j\in[M]$}
        \STATE $B_j=\{N_1+\cdots+N_{j-1}+1,\cdots,N_1+\cdots +N_j\}$.
        \STATE $N_o^{(j)}=\sum_{i\in B_j}\textbf{1}\{o_i=o\}$, $\tilde{N}_o^{(j)}=\sum_{i\in B_j}\textbf{1}\{\tilde{o}_i=o\}$.
        \STATE $Z^{(j)}=\textbf{1}\{\sum_{o\in \cO} \frac{\left(N_o^{(j)}-\tilde{N}_o^{(j)}\right)^2-N_o^{(j)}-\tilde{N}_o^{(j)}}{N_o^{(j)}+\tilde{N}_o^{(j)}}\leq \sqrt{3N_j}\}$.
    \ENDFOR
    \RETURN \textbf{accept} \textbf{if} $\sum_{j\in[M]}z^{(j)}\geq M/2$, \textbf{else} \textbf{reject}
\end{algorithmic}
\end{algorithm}

We next employ closeness testing and clustering to assign pseudo states $(\widetilde{s}_1^t,\dots,\widetilde{s}_H^t)$ to the trajectory $\tau_k^t$ (Line~\ref{line:call}) using Algorithm~\ref{algo:assign}. For each $k$-observation $o_h^{t,(1:k)}$ generated from the state visited in the new iteration, we perform a closeness test with all past $\sets{o_h^{t',(1:k)}}_{t'>t}$ to check if they belong to the same pseudo state: Two states are the same state if their $k$ observations pass closeness testing, different if they fail closeness testing. Using the test results, we perform a simple ``clustering'' step: If $o_h^{t,(1:k)}$ passes the closeness test against all $\sets{o_h^{t',(1:k)}}$ who has been assigned as pseudo state $\widetilde{s}$, then we assign $\widetilde{s}$ to $o_h^{t,(1:k)}$. If the state is not assigned after all tests, then that indicates the encountered latent state has not been encountered before (not in the current pseudo state space $[n_h^t]$), in which case we assign $o_h^{t,(1:k)}$ with a new pseudo state $n_h^t+1$, which enlarges the pseudo state space to $[n_h^{t+1}]=[n_h^t+1]$.

Our particular closeness testing algorithm (Algorithm~\ref{algo:clustering}) adapts the test and analysis in \cite{lee2023learning} and makes certain modifications, such as repeating a test with a Poisson number of samples $\log(1/\delta)$ times to reduce the failure probability from a $\Theta(1)$ constant to $\delta$, as well as imposing a hard upper limit $k$ on the total sample size (so that the test can be implemented under the \kmomdp{} feedback), instead of a Poisson number of samples which is unbounded.

Finally, using the assigned pseudo states, we update the visitation counts of each (pseudo) state $s$ and state-action $(s,a)$ (Line~\ref{line:nfors}-\ref{line:nfora}). Then we update the pseudo latent models $(\widehat{T}_{t+1},\widehat{O}_{t+1})=(\{\widehat{T}_h^{t+1}\}_{h=1}^H,\{\widehat{O}_h^{t+1}\}_{h=1}^H)$ using empirical estimates based on the previous data (Line~\ref{line:transition}):
\begin{align}
& \widehat{T}^{t+1}_h\left(s^{\prime} \mid s, a\right) =\sum_{\ell \in[t]} \frac{\mathbf{1}\left\{s_h^{\ell}=s, s_{h+1}^{\ell}=s^{\prime}\right\}}{\min\{1,n_h^{t+1}(s, a)\}}, \label{eq:esti_T} \\
& \widehat{O}^{t+1}_h(o \mid s)  =\sum_{\ell \in[t]} \frac{\mathbf{1}\left\{s_h^{\ell}=s, o_h^{\ell}=o\right\}}{\min\{1,n_h^{t+1}(s)\}}. \label{eq:esti_O}
\end{align}
The algorithm goes to the next iteration

\paragraph{Theoretical guarantee} We now present the main guarantee for \ost{} for learning distinguishable POMDPs. The proof can be found in Appendix~\ref{app:cluster}.
\begin{theorem}[Learnining distinguishable POMDPs by \ost{}]
\label{theorem:cluster}
For any $\alpha$-distinguishable POMDP, choosing $\beta_1=\cO(H^3 \log (O S A H K / \delta)), \beta_2=\cO(O \log (OS K H / \delta))$ and $k=\tO((\sqrt{O}/\alpha^2+O^{2/3}/\alpha^{4/3}))$, with probability at least $1-\delta$, the output policy of Algorithm \ref{algo:clustering_based} is $\epsilon$-optimal after the following number of samples:
\begin{align*}
    \tO\paren{ {\rm poly}(H)\cdot \paren{ \frac{SO}{\eps^2} + \frac{SAk}{\eps^2}} }= \tO\paren{ {\rm poly}(H)\cdot \paren{ \frac{SO}{\eps^2} + \frac{SA\sqrt{O}}{\eps^2\alpha^2} + \frac{SAO^{2/3}}{\eps^2\alpha^{4/3}}}}.
\end{align*}    
\end{theorem}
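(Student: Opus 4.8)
The plan is to proceed in four steps, isolating the distribution-testing core (which is the novel ingredient) from the optimism-based analysis (which adapts the hindsight-observable template of \citet{lee2023learning}). The first and most important step is to show that the subroutine \clt{} (\Cref{algo:clustering}), run with a total budget $k=\tO(\sqrt O/\alpha^2 + O^{2/3}/\alpha^{4/3})$, correctly decides whether two $k$-fold observation batches come from the same emission distribution or from ones that are $\alpha$-far in $\ell_1$, with failure probability at most some target $\delta'$. I would argue this by recognizing the statistic $Z^{(j)}$ as the Poissonized chi-square tester of \citet{chan2013optimal}, which under the $\alpha$-separation promise is correct with constant probability using $\Theta(\sqrt O/\alpha^2 + O^{2/3}/\alpha^{4/3})$ samples; splitting the budget into $M=\cO(\log(1/\delta'))$ independent batches and majority-voting amplifies this to $1-\delta'$ by Chernoff. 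Two checks remain: the hard cap (\textbf{fail} when $N_1+\cdots+N_M>k$) fires only with negligible probability by Poisson concentration of the total batch size, and both directions of the decision (same-state $\to$ \textbf{accept}, $\alpha$-separated $\to$ \textbf{reject}) hold simultaneously, which is exactly the two-sided closeness-testing guarantee of \citet{batu2013testing,chan2013optimal}.

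Second, I would union-bound over all pairwise tests executed during the run. Each iteration runs at most $\cO(S)$ tests per step, so there are at most $\tO(THS)$ tests overall; taking $\delta'=\delta/\poly(T,H,S)$ (which inflates $M$, and hence $k$, only by logarithmic factors absorbed into $\tO(\cdot)$) makes every test correct simultaneously with probability at least $1-\delta/2$. On this event, the rule of \Cref{algo:assign} assigns two batches the same pseudo-state if and only if they were emitted by the same true latent state, so at every step $h$ the pseudo-states coincide with the true latent states up to a fixed permutation. This is the only place $\alpha$-distinguishability is used, and it is what forces the stated lower bound on $k$.

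Third, conditioned on exact clustering, the estimators \eqref{eq:esti_T}, \eqref{eq:esti_O}, \eqref{eq:esti_r} become precisely the hindsight-observable empirical estimators of \citet{lee2023learning} (up to the harmless relabeling), so standard $\ell_1$/Bernstein concentration shows that, with $\beta_1=\cO(H^3\log(OSAHK/\delta))$ and $\beta_2=\cO(O\log(OSKH/\delta))$, the bonuses $b_t(s,a)$ and $b_t(s)$ dominate the transition and emission errors respectively (the extra $H$ on $b_t(s)$ accounting for horizon-propagation of emission error, and the aggregation of all $k$ hindsight observations in the emission estimate being what keeps the $SO/\eps^2$ term free of $k$, whereas the transition estimate uses only the single recovered state sequence). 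Optimism then gives $V_{\widehat T_t,\widehat O_t,\widehat r_t}(\pi^t)\ge V^*$, and a POMDP simulation (value-difference) lemma bounds $V^*-V(\pi^t)$ by the expected bonus mass collected along $\pi^t$; telescoping with the pigeonhole bound $\sum_t 1/\sqrt{n_t(\cdot)}\lesssim\sqrt{(\#\text{states})\,T}$ yields
\begin{align*}
    \sum_{t=1}^T\paren{V^*-V(\pi^t)} \lesssim \poly(H)\paren{\sqrt{\beta_1\,SA\,T}+\sqrt{\beta_2\,SO\,T}}.
\end{align*}
Returning the best iterate and setting the average suboptimality to $\eps$ fixes $T$; multiplying by the $Hk$ samples consumed per iteration gives $\tO(\poly(H)(SO/\eps^2+SAk/\eps^2))$, and substituting $k$ produces the final bound.

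The main obstacle is the first two steps: converting the constant-probability, promise-form closeness tester into a high-probability, two-sided decision under a \emph{bounded} sample budget $k$, and then verifying that a union bound over all $\tO(THS)$ tests still yields \emph{exact} clustering at the claimed $k=\tO(\sqrt O/\alpha^2 + O^{2/3}/\alpha^{4/3})$. Once the pseudo-states are certified correct up to permutation, the remaining optimism and regret-to-PAC analysis is a faithful adaptation of the hindsight-observable argument and presents no essential new difficulty.
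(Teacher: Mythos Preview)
Your proposal is correct and takes essentially the same approach as the paper: establish a high-probability guarantee for \clt{} by recognizing the statistic as the Poissonized $\chi^2$ tester of \citet{chan2013optimal}, amplifying via $M=\cO(\log(1/\delta'))$ repetitions (with the hard cap on $\sum_j N_j$ handled by Poisson concentration), union-bound over all tests to certify that pseudo-states equal true states up to a permutation, and then reduce to the hindsight-observable analysis of \citet{lee2023learning}. Two minor notes: \Cref{algo:assign} tests the new batch against \emph{every} past batch with the candidate label, so the total number of tests is $O(T^2H)$ rather than $\tO(THS)$ (harmless, as it only enters through $\log(1/\delta')$); and the paper explicitly isolates the reward-estimation error $\bar r_h(s,a)-r_h(s,a)$ (absent in \citet{lee2023learning}) and bounds it by the emission error, a point you correctly absorb into the bonus $b_t(s)$ in Step~3.
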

The proof of Theorem~\ref{theorem:cluster} builds on high-probability correctness guarantees of \clt{}, which enables us to adapt the algorithm and analysis of the hindsight observable setting~\citet{lee2023learning} if all tests return the correct results (so that pseudo states coincide with the true latent states up to a permutation). Compared to the rate obtained by \komle{} (Eq.~\eqref{eqn:rate-distinguishable-komle}), Theorem \ref{theorem:cluster} achieves a better sample complexity (all three terms above are smaller the $S^2AO^{1.5}/(\alpha^2\epsilon^2)$ term therein, ignoring $H$ factors). Technically, this is enabled by the \emph{explicit} closeness tests built into \ost{} combined with a sharp analysis of learning tabular POMDPs with observed latent states, rather than the implicit identity tests used in the reduction approach (Theorem~\ref{theorem:contain}) with the \komle{} algorithm.

\section{Conclusion}
In this paper, we investigated $k$-Multiple Observations MDP ($k$-MOMDP), a new enhanced feedback model that allows efficient learning in broader classes of Partially Observable Markov Decision Processes (POMDPs) than under the standard feedback model. We introduced two new classes of POMDPs---$k$-MO-revealing POMDPs and distinguishable POMDPs and designed sample-efficient algorithms for learning in these POMDPs under $k$-MOMDP feedback. Overall, our results shed light on the broader question of when POMDPs can be efficiently learnable from the lens of enhanced feedbacks. We believe our work opens up many directions for future work, such as lower bounds for the sample complexities, identifying alternative efficiently learnable classes of POMDPs under $k$-MOMDP feedback, generalizing distinguishable POMDPs to the function approximation setting, or developing more computationally efficient algorithms.

\bibliography{ref}
\bibliographystyle{plainnat}

\makeatletter
\def\renewtheorem#1{%
  \expandafter\let\csname#1\endcsname\relax
  \expandafter\let\csname c@#1\endcsname\relax
  \gdef\renewtheorem@envname{#1}
  \renewtheorem@secpar
}
\def\renewtheorem@secpar{\@ifnextchar[{\renewtheorem@numberedlike}{\renewtheorem@nonumberedlike}}
\def\renewtheorem@numberedlike[#1]#2{\newtheorem{\renewtheorem@envname}[#1]{#2}}
\def\renewtheorem@nonumberedlike#1{  
\def\renewtheorem@caption{#1}
\edef\renewtheorem@nowithin{\noexpand\newtheorem{\renewtheorem@envname}{\renewtheorem@caption}}
\renewtheorem@thirdpar
}
\def\renewtheorem@thirdpar{\@ifnextchar[{\renewtheorem@within}{\renewtheorem@nowithin}}
\def\renewtheorem@within[#1]{\renewtheorem@nowithin[#1]}
\makeatother

\renewtheorem{theorem}{Theorem}[section]
\renewtheorem{lemma}{Lemma}[section]
\renewtheorem{remark}{Remark}
\renewtheorem{corollary}{Corollary}[section]
\renewtheorem{observation}{Observation}[section]
\renewtheorem{proposition}{Proposition}[section]
\renewtheorem{definition}{Definition}[section]
\renewtheorem{claim}{Claim}[section]
\renewtheorem{fact}{Fact}[section]
\renewtheorem{assumption}{Assumption}[section]
\renewcommand{\theassumption}{\Alph{assumption}}
\renewtheorem{conjecture}{Conjecture}[section]

\clearpage
\appendix

\section{Technical Lemmas}
\begin{lemma}\label{lem:norm_y}
Suppose $Y_s\in \{0,1\}^{\cO^k}$ for all $s\in \cS$,  and the locations of the $1's$ are disjoint within the rows $s\in \cS$. The matrix $\Y$ is defined by 
\begin{align*}
    \Y \defeq \begin{bmatrix}
        Y_1^\top \\ \vdots \\ Y_S^\top
    \end{bmatrix} \in \R^{\cS\times\cO^k}.
\end{align*}
Then we have 
\begin{align*}
    \norm{\Y}_{1\to 1} = 1.
\end{align*}
\end{lemma}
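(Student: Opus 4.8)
The plan is to reduce the claim to the elementary fact that the $(1\to 1)$ operator norm of a matrix equals its largest absolute column sum, and then read off both the upper and lower bound directly from the disjoint-support hypothesis.

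First I would recall the column-sum characterization: for any $M\in\R^{m\times n}$,
\[
\norm{M}_{1\to 1}=\max_{\norm{x}_1=1}\norm{Mx}_1=\max_{j\in[n]}\sum_{i=1}^m \abs{M_{ij}}.
\]
The inequality $\norm{Mx}_1\le(\max_j\sum_i\abs{M_{ij}})\norm{x}_1$ follows by the triangle inequality and regrouping the double sum, and it is tight at a standard basis vector $x=e_{j^\star}$ (equivalently, $x\mapsto\norm{Mx}_1$ is convex over the $\ell_1$ ball, whose extreme points are $\pm e_j$). This identifies $\norm{\Y}_{1\to 1}$ with the largest $\ell_1$-norm among the columns of $\Y$.

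Next I would index the columns of $\Y$ by $o^{(1:k)}\in\cO^k$ and observe that its entry in row $s$ and this column is exactly $Y_s(o^{(1:k)})\in\{0,1\}$. The hypothesis that the supports $\{o^{(1:k)}:Y_s(o^{(1:k)})=1\}$ are pairwise disjoint across $s\in\cS$ says precisely that each fixed column of $\Y$ contains at most one nonzero entry. Hence every column sum $\sum_{s\in\cS}Y_s(o^{(1:k)})$ lies in $\{0,1\}$, giving the upper bound $\norm{\Y}_{1\to 1}\le 1$. For the matching lower bound, I would use that each $Y_s$ is a nonzero $0/1$ vector (the disjointness hypothesis refers to the locations of the $1$'s, so at least one such location exists); picking any $o^{(1:k)}$ in the support of some $Y_s$ exhibits a column of $\Y$ with $\ell_1$-norm equal to $1$, so $\norm{\Y}_{1\to 1}\ge 1$. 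Combining the two bounds yields $\norm{\Y}_{1\to 1}=1$.

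I expect no genuine obstacle here: the only points requiring care are the column-sum characterization of the $(1\to1)$ norm and the verification that disjoint supports force every column to have at most a single $1$. The lower bound implicitly needs each $Y_s$ to be nonzero, which is the natural reading of the statement and is guaranteed by the construction in which this lemma is applied (each column of the left inverse being the indicator of a test outcome).
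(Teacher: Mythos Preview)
Your proof is correct and rests on the same underlying observation as the paper's: each column of $\Y$ contains at most one $1$. You invoke the column-sum identity $\norm{M}_{1\to1}=\max_j\sum_i\abs{M_{ij}}$ directly, which is the cleaner route; the paper instead bounds $\lone{\Y\mathbf{x}}$ for arbitrary $\mathbf{x}$ and in doing so writes $(\Y\mathbf{x})_s=x_{j_s}$ for a single index $j_s$, an equality that implicitly assumes each \emph{row} has exactly one nonzero entry---which is not what the hypothesis guarantees (only columns are so constrained; a row $Y_s$ may contain many $1$'s). Your formulation via column sums sidesteps this slip entirely. For the lower bound, both arguments tacitly need $\Y$ to have at least one nonzero entry; as you note, this holds in the construction where the lemma is actually applied.
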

\begin{proof}[Proof of Lemma \ref{lem:norm_y}]
Let's analyze the action of $\Y$ on an arbitrary non-zero vector $\mathbf{x} \in \mathbb{R}^{\cO^k}$. Since each column of $\Y$ has at most one non-zero element, which is $1$, the action of $\Y$ on $\mathbf{x}$ is:
\begin{align*}
  \Y \mathbf{x}=\left[
\begin{array}{c}
\sum_{i=1}^{\cO^k} Y_{1, i} x_i \\
\vdots \\
\sum_{i=1}^{\cO^k} Y_{S, i} x_i
\end{array}\right]=\left[\begin{array}{c}
x_{j_1} \\
\vdots \\
x_{j_S}
\end{array}\right]  
\end{align*}
Here, $x_{j_s}$ is the element of $\mathbf{x}$ corresponding to the non-zero entry in row $s$ of $\Y$. Then, we have:
\begin{align*}
    \norm \Y \mathbf{x}=\sum_{s=1}^S| x_{j_s}|\leq \sum_{i=1}^{\cO^k}| x_i |=\norm{x}.
\end{align*}
It follows that $\frac{\norm{\Y\mathbf{x}}_1}{\norm{\mathbf{x}}1} \leq 1$ for any non-zero $\mathbf{x} \in \mathbb{R}^{\cO^k}$, so $\norm{\Y}_{1\to 1} \leq 1$.

Now let's find a non-zero vector $\mathbf{x}$ for which $\frac{\norm{\Y\mathbf{x}}_1}{\norm{\mathbf{x}}_1} = 1$. Let $\mathbf{x}$ be the vector with all elements equal to $1$, i.e., $x_i = 1$ for all $i$. Then, the action of $\Y$ on $\mathbf{x}$ is:
\begin{align*}
    \Y \mathbf{x}=\left[\begin{array}{c}
1 \\
\vdots \\
1
\end{array}\right].
\end{align*}

This gives us $\norm{\Y}_{1\to 1} \geq 1$.

Combining the two inequalities, we finish the proof of Lemma \ref{lem:norm_y}.
\end{proof}
\begin{lemma}\label{lem:inv}
Suppose $E$ is a matrix satisfies that $\|E\|_{1\rightarrow 1}<1$, then $\mathbf{I}+E$ is invertible.
\end{lemma}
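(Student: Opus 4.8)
The plan is to exploit the fact that $\|\cdot\|_{1\to 1}$ is the operator norm induced by the $\ell_1$ vector norm, defined by $\|A\|_{1\to 1} = \sup_{x\neq \mathbf{0}}\|Ax\|_1/\|x\|_1$. In particular it is \emph{compatible} with that vector norm, so that $\|Ax\|_1 \le \|A\|_{1\to 1}\|x\|_1$ for every vector $x$ of matching dimension. Since $\mathbf{I}+E$ is a square matrix, invertibility is equivalent to injectivity, so it suffices to show that $(\mathbf{I}+E)x = \mathbf{0}$ forces $x = \mathbf{0}$.

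To that end I would argue by contradiction. Suppose there is a nonzero $x$ with $(\mathbf{I}+E)x = \mathbf{0}$, equivalently $x = -Ex$. Taking $\ell_1$ norms and using compatibility gives $\|x\|_1 = \|Ex\|_1 \le \|E\|_{1\to 1}\|x\|_1$. Since $x \neq \mathbf{0}$ we have $\|x\|_1 > 0$, so dividing yields $1 \le \|E\|_{1\to 1}$, contradicting the hypothesis $\|E\|_{1\to 1} < 1$. Hence the kernel of $\mathbf{I}+E$ is trivial and the matrix is invertible.

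As an alternative (and more constructive) route, one could instead build the inverse explicitly via the Neumann series $(\mathbf{I}+E)^{-1} = \sum_{n=0}^{\infty}(-1)^n E^n$. Here one uses submultiplicativity of the operator norm, $\|E^n\|_{1\to 1} \le \|E\|_{1\to 1}^n$, so that the partial sums form a Cauchy sequence in the (finite-dimensional, hence complete) space of matrices. Writing $S$ for the limit, a telescoping computation $(\mathbf{I}+E)\sum_{n=0}^{N}(-1)^n E^n = \mathbf{I} + (-1)^N E^{N+1} \to \mathbf{I}$, together with the analogous identity multiplying on the right, shows $(\mathbf{I}+E)S = S(\mathbf{I}+E) = \mathbf{I}$.

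There is no substantial obstacle here, as this is the standard ``small perturbation of the identity'' fact; the only point requiring minor care is to justify that $\|\cdot\|_{1\to 1}$ is genuinely a submultiplicative operator norm compatible with $\|\cdot\|_1$, which is immediate from its definition as a supremum of $\|Ax\|_1/\|x\|_1$. For the write-up I would favor the contradiction argument, since it is the shortest and avoids any convergence bookkeeping, invoking only that injectivity implies invertibility for square matrices.
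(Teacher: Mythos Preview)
Your contradiction argument is correct and is essentially identical to the paper's proof: assume a nonzero $x$ with $(\mathbf{I}+E)x=0$, deduce $\|x\|_1=\|Ex\|_1$, and conclude $\|E\|_{1\to 1}\ge 1$. The Neumann-series alternative you sketch is also fine (and the paper in fact uses that series later to bound $\norm{(\mathbf{I}+E)^{-1}}_{1\to 1}$), but for the lemma itself your chosen write-up matches the paper's approach.
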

\begin{proof}
To prove this lemma, we will show that $\mathbf{I} + E$ has no eigenvalue equal to zero. If there $0$ is an eigenvalue and there exists $x\neq 0$, s.t. 
\begin{align*}
    (\mathbf{I}+E) \vec{x}=0,
\end{align*}
which means 
\begin{align*}
    \|x\|_1=\|Ex\|_1.
\end{align*}
This implies that $\|E\|_{1\rightarrow1}\geq 1$, which contradicts with the fact that $\|E\|_{1\rightarrow 1}<1$. Hence $\mathbf{I}+E$ must be invertible.
\end{proof}

\section{Proofs for Section \ref{sec:mo}}\label{app:mo}

\subsection{Proof of Proposition \ref{prop:not}}\label{app:not}
\begin{proof}

Inspired by the bad case in \citet{liu2022partially}, we construct a combination lock to prove the proposition, which is defined as follows:

Consider two states, labeled as a "good state" ($s_g$) and a "bad state" ($s_b$), and two observations, $o_g$ and $o_b$. For the initial $H-1$ steps, the emission matrices are
$$
\left(\begin{array}{cc}
1/2 & 1/2  \\
1/2 & 1/2
\end{array}\right).
$$
while at step $H$, the emission matrix becomes
$$
\left(\begin{array}{ll}
1 & 0 \\
0 & 1 
\end{array}\right) .
$$
This  implies that no information is learned at each step $h \in[H-1]$, but at step $H$, the current latent state is always directly observed.

In our model, there are $A$ different actions with the initial state set as $s_g$. The transitions are defined as follows: For every $h \in[H-1]$, one action is labeled "good", while the others are "bad". If the agent is in the "good" state and makes the "good" action, it remains in the "good" state. In all other scenarios, the agent transitions to the "bad" state. The good action is randomly selected from $\cA$ for each $h \in[H-1]$. The episode concludes immediately after $o_H$ is obtained.

All observations during the initial $H-1$ steps get a reward of $0$. At step $H$, observation $o_g$ produces a reward of $1$, while observation $o_b$ yields $0$. Therefore, the agent receives a reward of $1$ solely if it reaches the state $s_g$ at step $H$, i.e., if the optimal action is taken at every step.

Assume we attempt to learn this POMDP using an algorithm $\cX$, where we are given $T$ episodes, with $k$-multiple observations, to interact with the POMDP. Regardless of the selection of $k$, no information can be learned at step $h\in[H-1]$, making the best strategy a random guess of the optimal action sequence. More specifically, the probability that $\cX$ incorrectly guesses the optimal sequence, given that we have $T$ guesses, is 
\begin{align*}
    {A^{H-1}-1 \choose T} / {A^{H-1} \choose T} =(A^{H-1}-T)/A^{H-1}.
\end{align*}
For $T \leq A^{H-1} / 10$, this value is at least $9 / 10$. Therefore, with a minimum probability of $0.9$, the agent learns nothing except that the chosen action sequences are incorrect, and the best policy it can produce is to randomly select from the remaining action sequences, which is less efficient than $(1/2)$-optimal. This concludes our proof.
\end{proof}

\subsection{Proof of Proposition \ref{pro:relation}}\label{app:relation}
\begin{proof}
First, we prove the existence of an $(\alpha,k+1)$-MO-revealing POMDP $\cP$ that is not an $(\alpha,k)$-MO-revealing POMDP. To this end, it suffices to consider a POMDP with a single step ($H=1$) with emission matrix $\O\equiv \O_1\in\R^{\cO\times\cS}$.

We consider the following POMDP with $H=1$, $O=2$ and $S=k+2$. We denote $\cO=\{o_1,o_2\}$ and $\cS=\{s_1,\cdots, s_{k+2}\}$. The probabilities $\mathbb{O}(o_1\mid s_i)$ and $\mathbb{O}(o_2\mid s_i)$ for $i\in [k+2]$ are denoted as $u_i$ and $v_i$ respectively, with $\sets{v_1,\dots,v_{k+2}}\subset(0,1)$ being distinct. It should be noted that $u_i+v_i=1$ for any $i$.

We first consider the rank of $\mathbb{O}^{\otimes k}\in\R^{2^k\times (k+2)}$. Note that $\O^{\otimes k}(o^{ 1:k}|s)$ only depends on the number of $o_1$'s and $o_2$'s within $o^{1:k}$, which only has $k+1$ possibilities ($o_1^k,o_1^{k-1}o_2, \cdots, o_2^{k}$). Therefore, $\O^k$ only has at most $(k+1)$ distinct rows, and thus $\rank(\mathbb{O}^{\otimes k})\le k+1$. Since $k+1<\min\sets{2^k, k+2}$ for all $k\ge 2$, $\mathbb{O}^{\otimes k}$ is rank-deficient, and thus the constructed POMDP with emission matrix $\O$ is not an $(\alpha',k)$-MO-revealing POMDP for any $\alpha'>0$.

Next, we consider the rank of $\mathbb{O}^{\otimes k+1}$. Using similar arguments as above, $\O^{\otimes k+1}$ has $(k+2)$ distinct rows, and thus its rank equals the rank of the corresponding $(k+2)\times(k+2)$ submatrix: 
\begin{align*}
\begin{bmatrix}
u_1^{k+1} & u_2^{k+1} & \cdots & u_{k+2}^{k+1} \\
u_1^k v_1 & u_2^k v_2& \cdots & u_{k+2}^k v_{k+2}\\
\vdots & \vdots & \ddots & \vdots \\
v_1^{k+1} & v_2^{k+1} & \cdots & v_{k+2}^{k+1} \\
\end{bmatrix}.
\end{align*}
By rescaling each column $i$ with $1/u_i^{k+1}$, the resulting matrrix is a Vandermonde matrix generated by distinct values $\sets{v_i/u_i}_{i\in[k+2]}$, and thus has full rank~\citep[Exercise 6.18]{boyd2018introduction}. This implies that $\O^{\otimes k+1}$ is full-rank and thus admits a finite left inverse (for example its pseudo inverse) $(\O^{\otimes k+1})^+$ with finite $(1\to 1)$ norm. This shows the constructed POMDP is $(\alpha,k+1)$-MO-revealing with $\alpha=\norm{(\O^{\otimes k+1})^+}_{1\to 1}^{-1}>0$.

Now we prove that any $(\alpha,k)$-MO-revealing POMDP $\cP$ is also an $(\alpha,k+1)$-MO-revealing POMDP. Let $\cP$ be an $(\alpha,k)$-revealing POMDP. Fix any $h\in[H]$ and let $\O\equiv \O_h$ for shorthand. Let $(a_{ij})_{ij}$ represent the $\mathbb{O}^{\otimes k}$ matrix of $\cP$, where $i\in \cO^k$ and $j\in \cS$. Let $(b_{ij})_{ij}$ denote the $\mathbb{O}^{\otimes k+}$ matrix of $\cP$, where $i\in \cS$ and $j\in \cO^k$.

By the definition of $\mathbb{O}^{\otimes k+}$, we have the following equations:
\begin{align}
\label{eq:sum_1}
& \sum_{o_1\cdots o_k \in \cO^k} b_{s, o_1\cdots o_k }a_{o_1\cdots o_k, s'} = \indic{s=s'}, \quad \forall s,s'\in \cS.
\end{align}
Let $(\bar{a}_{ij})_{ij}$ represent the $\mathbb{O}^{\otimes k+1}$ matrix of $\cP$, where $i\in \cO^{k+1}$ and $j\in \cS$. Note that we have $\bar{a}_{o_1\cdots o_{k+1}, s} = a_{ o_1\cdots o_k, s}\mathbb{O}(o_{k+1}\mid s)$.

Now we construct $\mathbb{O}^{\otimes k+1+} = (\bar{b}_{ij})_{ij}$ as $\bar{b}_{s, o_1\cdots o_{k+1}} \defeq b_{s, o_1\cdots o_k}$. For all $s,s'\in \cS$, we have:
\begin{align*}
\sum_{o_1\cdots o_{k+1} \in \cO^{k+1}}\bar{b}_{s, o_1\cdots o_{k+1} }\bar{a}_{o_1\cdots o_{k+1},s'} &= \sum_{o_1\cdots o_{k+1} \in \cO^{k+1}}b_{s, o_1\cdots o_k }a_{o_1\cdots o_k, s'} \mathbb{O}(o_{k+1}\mid s') \\
&= \sum_{o_1\cdots o_{k} \in \cO^{k}}b_{s, o_1\cdots o_k }a_{o_1\cdots o_k, s'} \\
&= \indic{s=s'},
\end{align*}
where the last equation follows from~\eqref{eq:sum_1}. 
This shows that the constructed $(\mathbb{O}^{\otimes k+1})^+$ is indeed a left inverse of $\O^{\otimes k+1}$. Further, for any vector $v\in\R^{\cO^{k+1}}$, we have
\begin{align*}
    & \quad \lone{(\O^{\otimes k+1})^+v} = \sum_{s\in\cS} \sum_{o_{1:k+1}} \abs{\bar{b}_{s,o_1\cdots o_{k+1}} v_{o_1\cdots o_{k+1}}} = \sum_{s\in\cS} \sum_{o_{1:k+1}} \abs{b_{s,o_1\cdots o_{k}} v_{o_1\cdots o_{k+1}}} \\
    & \le \sum_{o_{k+1}} \lone{(\O^{\otimes k})^+ v_{:, o_{k+1}}} \le \alpha^{-1} \sum_{o_{k+1}} \lone{ v_{:, o_{k+1}} } = \alpha^{-1}\lone{v}.
\end{align*}
This shows that $\norm{(\O^{\otimes k+1})^+}_{1\to 1}\le \alpha^{-1}$. Since the above holds for any $h\in[H]$, we have $\cP$ is an $(\alpha, k+1)$-revealing POMDP.

\section{Proofs for Section \ref{sec:reveal}}
\label{app:re}

\subsection{Proof of Theorem \ref{thm:komle} and Theorem \ref{thm:komle_tabular}}
\label{app:komle}

First, we define the optimistic cover and optimistic covering number for any model class $\Theta$ and $\rho>0$. The definition is taken from \citet{chen2022unified}.

\begin{definition}[Optimistic cover and optimistic covering number \citep{chen2022unified}]
Suppose that there is a context space $\mathcal{X}$. An optimistic $\rho$-cover of $\Theta$ is a tuple $\left(\widetilde{\mathbb{P}}, \Theta_0\right)$, where $\Theta_0 \subset \Theta$ is a finite set, $\widetilde{\mathbb{P}}=\left\{\widetilde{\mathbb{P}}_{\theta_0}(\cdot) \in \mathbb{R}_{\geqslant 0}^{\mathcal{T}^H}\right\}_{\theta_0 \in \Theta_0, \pi \in \Pi}$ specifies a optimistic likelihood function for each $\theta \in \Theta_0$, such that:
\begin{enumerate}[topsep=0pt, leftmargin=2em, itemsep=0pt]
\item For $\theta \in \Theta$, there exists a $\theta_0 \in \Theta_0$ satisfying: for all $\tau \in \mathcal{T}^H$ and $\pi$, it holds that $\widetilde{\mathbb{P}}_{\theta_0}^\pi(\tau) \geqslant \mathbb{P}_\theta^\pi(\tau)$;
\item For $\theta \in \Theta_0, \max _\pi\left\|\mathbb{P}_\theta^\pi\left(\tau_H=\cdot\right)-\widetilde{\mathbb{P}}_\theta^\pi\left(\tau_H=\cdot\right)\right\|_1 \leqslant \rho^2$.
\end{enumerate}
The optimistic covering number $\mathcal{N}_{\Theta}(\rho)$ is defined as the minimal cardinality of $\Theta_0$ such that there exists $\widetilde{\mathbb{P}}$ such that $\left(\widetilde{\mathbb{P}}, \mathcal{M}_0\right)$ is an optimistic $\rho$-cover of $\Theta$.
\end{definition}

Remind that we consider the $k$-MOMDP as an augmented POMDP and note that we are going to find the optimized policy in \pires{}, which only depends on the single immediate observation $o_h^{(1)}$. 

Our \komle{} algorithm can be seen as an adaptation of Algorithm OMLE in \citet{chen2022partially} to the augmented POMDP with policy class \pires{}. 

\citet{chen2022partially} showed that OMLE achieves the following estimation bound for low-rank POMDPs.

\begin{theorem}[Theorem 9 in \citet{chen2022partially}]\label{thm:chen}
Choosing $\beta=\cO (\log \left(\mathcal{N}_{\Theta} / \delta\right))$, then with probability at least $1-\delta$, Algorithm OMLE outputs a policy $\pi^T$ such that $V_{\star}-V_{\theta^{\star}}\left(\pi^T\right) \leqslant \varepsilon$, after
\begin{align*}
N=T H =\widetilde{\mathcal{O}}\left({\rm poly}(H d_{PSR} A \log \mathcal{N}_{\Theta} /\left(\alpha^2 \varepsilon^2\right)\right)
\end{align*}
samples, where they considered POMDPs as Predictive State Representations (PSRs), and $d_{PSR}$ is the PSR rank. For low-rank POMDP, $d_{PSR}\leq d$, where $d$ is the rank of the decomposition of the transition kernel.
\end{theorem}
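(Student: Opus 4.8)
The plan is to establish the standard three-part analysis of the \omle{} algorithm: (i) validity of the optimistic confidence sets together with in-sample Hellinger error control for every surviving model, (ii) an optimism-plus-decomposition step reducing the regret to observable (trajectory-distribution) discrepancies via the PSR/revealing structure, and (iii) an eluder-type pigeonhole argument that converts these discrepancies into a cumulative bound scaling with the PSR rank $d_{PSR}$ and the revealing constant $\alpha$. Dividing by $T$ and solving for $\varepsilon$ yields the claimed iteration and sample complexity.

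First I would prove confidence-set validity. With the MLE super-level-set construction of radius $\beta = \cO(\log(\mathcal{N}_\Theta/\delta))$, a martingale concentration argument on the log-likelihood ratios $\sum_s \log\bigl(\mathbb{P}_{\theta^\star}^{\pi^s}(\tau^s)/\mathbb{P}_\theta^{\pi^s}(\tau^s)\bigr)$, taken over an optimistic $\rho$-cover of $\Theta$ with a union bound, shows that with probability at least $1-\delta$ the true model $\theta^\star$ lies in $\Theta^t$ for every $t$. The optimistic covering number enters precisely here: because the cover's likelihoods dominate the true likelihoods pointwise, the finite-cover concentration transfers to all of $\Theta$. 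The same likelihood analysis simultaneously yields that every $\theta \in \Theta^t$ has small cumulative squared Hellinger error along the executed exploration policies,
\[
\sum_{s < t} \mathbb{E}_{\pi^s}\!\left[ D_{\mathrm{H}}^2\!\left( \mathbb{P}_\theta^{\pi^s},\, \mathbb{P}_{\theta^\star}^{\pi^s} \right) \right] \lesssim \beta .
\]

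Next I would exploit optimism. Since $(\theta^t,\pi^t)$ maximizes the value over $\Theta^t\times\Pi$ and $\theta^\star\in\Theta^t$, we have $V_{\theta^t}(\pi^t)\ge V_{\star}$, so the per-round suboptimality is bounded by the model-mismatch gap $V_{\theta^t}(\pi^t)-V_{\theta^\star}(\pi^t)$ of a \emph{fixed} policy under two models. The core step is to control $\sum_t\bigl(V_{\theta^t}(\pi^t)-V_{\theta^\star}(\pi^t)\bigr)$. I would invoke the PSR/revealing structure: the fixed-policy value gap decomposes through the predictive-state features, and the revealing (B-stability) condition with constant $\alpha$ guarantees this gap is dominated by the trajectory-distribution discrepancy under the exploration policies $\{\pi_{h,\exp}^t\}$ — which is exactly why these policies roll in with $\pi^t$ and then insert a uniform action, ensuring coverage so that observable discrepancies upper bound the latent estimation error (contributing the $\alpha^{-1}$ factor). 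Closing the loop with the generalized-eluder lemma for PSRs, which bounds out-of-sample prediction errors by in-sample errors times a complexity factor of order $d_{PSR}\,A$, and combining with the in-sample bound above, gives
\[
\sum_{t=1}^{T}\bigl(V_{\theta^t}(\pi^t)-V_{\theta^\star}(\pi^t)\bigr)\ \lesssim\ \frac{1}{\alpha}\sqrt{\,{\rm poly}(H)\,d_{PSR}\,A\,\beta\,T\,}.
\]
Setting the average gap to $\varepsilon$ yields $T=\tilde{\mathcal{O}}\bigl({\rm poly}(H)\,d_{PSR}\,A\,\log\mathcal{N}_\Theta/(\alpha^2\varepsilon^2)\bigr)$; each round costs $H$ samples, so $N=TH$ matches the stated bound, and $d_{PSR}\le d$ for low-rank POMDPs follows from the known PSR-rank bound in terms of the transition rank.

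The main obstacle is the second and third steps: proving the structural decomposition that the fixed-policy value gap between two PSRs is controlled by observable Hellinger discrepancies with the revealing constant $\alpha$, and then the pigeonhole/eluder bound that sums these discrepancies with complexity $\tilde{\cO}(d_{PSR}A)$. This is the technical heart of the B-stability framework, where the revealing condition does its essential work; by contrast, the concentration and optimism arguments are comparatively routine.
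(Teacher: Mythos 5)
This statement is not proved in the paper at all: it is imported verbatim as Theorem~9 of \citet{chen2022partially}, and the paper's ``proof'' of the surrounding result (Theorem~\ref{thm:komle}) consists of invoking it as a black box for the augmented POMDP with the restricted policy class \pires{}. Your outline is a faithful reconstruction of the actual argument in the cited reference --- MLE confidence-set validity via martingale concentration over an optimistic $\rho$-cover, optimism reducing regret to a fixed-policy value gap, the B-stability/revealing performance decomposition bounding that gap by Hellinger discrepancies under the exploration policies (with the $\alpha^{-1}$ factor), and the generalized $\ell_2$-eluder pigeonhole contributing the $d_{PSR}A$ complexity --- so it matches the source's approach essentially step for step, with the two hard structural lemmas correctly identified (though, as you acknowledge, only named rather than proved).
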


In the proof of Theorem 9 in \citet{chen2022partially}, the use of the policy class is based on the fact that the policy $\pi^t$ at each iteration is chosen to be the optimal policy within the model confidence set $\Theta^t$ for that round, specifically $\pi^t = \arg\max_{\theta\in\Theta^t, \pi}V_{\theta}(\pi)$. By replacing the policy class with $\Pi_{\mathrm{res}}$, we still maintain this property, ensuring that the chosen policy remains optimal within the updated model confidence set. Therefore, the replacement is valid and does not affect the optimality of the selected policies throughout the algorithm. Hence, by invoking Theorem \ref{thm:chen} and the above argument, we obtain the convergence rate stated in Theorem \ref{thm:komle}.

For tabular POMDPs with $S$ states, the PSR rank becomes $S$. Additionally, \citet{liu2022partially} showed that $\log \mathcal{N}_{\Theta}(\rho) \leq \mathcal{O}(H(S^2 A + S O) \log ( H S O A / \rho))$. By utilizing this result, we can derive the convergence rate for the tabular case.
\end{proof}
\paragraph{Extension to Explorative E2D and MOPS} 
The above augmentation (considering $k$-MOMDP as an augmented POMDP and searching in \pires{}) can also be applied to extend Theorem 10 in \citet{chen2022partially}. The theorem is achieved by Algorithm Explorative Estimation-to-Decisions (Explorative E2D). Furthermore, it can be extended to Theorem F.6 in \citet{chen2022partially}, which is achieved by Model-based optimistic posterior sampling (MOPS). The OMLE, Explorative E2D, and MOPS extension to $k$-MOMDPs share the same sample complexity rates.

MOPS and E2D require slightly stronger conditions compared to OMLE. While OMLE only necessitates $\theta^\star$ to possess the low PSR rank structure, E2D requires every model within $\Theta$ to exhibit the same low-rank structure. All three algorithms require every model within $\Theta$ to be k-MO-revealing, not just $\theta^*$.

\section{Proofs for Section \ref{sec:dis}}\label{app:dis}
\subsection{Proof of Proposition \ref{prop:super}}\label{app:super}
\begin{proof}
We utilize proof by contradiction to establish the validity of this problem. Suppose we have a tabular POMDP, denoted by $\cP$, that is not distinguishable. Hence there exists $i,j\in \cS$, such that 
\begin{align*}
    \lone{\O_h (e_i - e_j)}=0,
\end{align*}
which means that there must exist two different states, $s_1$ and $s_2$ belonging to $\cS$, such that they share the same emission kernels. As a result, the columns associated with $s_1$ and $s_2$ will be identical. Consequently, the $k$-fold tensor power of the observation space $\mathbb{O}^{\otimes k}$ for $\cP$ will be a rank-deficient matrix, implying that it lacks a left inverse. This leads us to conclude that $\cP$ cannot be a revealing POMDP. This contradiction substantiates our original proposition, hence completing the proof.
\end{proof}

\subsection{Proof of Theorem \ref{theorem:contain}}\label{app:contain}
\begin{proof}
Consider any $\alpha$-distinguishable POMDP and any fixed $h\in[H]$.

Step 1. By lemma \ref{lem:test}, we construct tests $Z_s=\sets{Z_s(o_{1:k})}_{o_{1:k}\in\cO^k}$ for each $s\in\cS$, such that $Z_s\le 1/2 $ with probability at least $1-\delta$ under $\O_h(\cdot|s)$, and $Z_s\ge 1$ with probability at least $1-\delta$ under $\O_h(\cdot|s')$ for any $s'\neq s$.

Step 2. For every $s\in\cS$, define ``identity test for latent state $s$'':
\begin{align*}
    Y_s(o_{1:k}) = \indic{ Z_s(o_{1:k}) = \min_{s'\in\cS} Z_{s'}(o_{1:k}) } \in \set{0,1},
\end{align*}
with an arbitrary tie-breaking rule for the min (such as in lexicographic order). Understand $Y_s\in\R^{\cO^k}$ as a vector. Define matrix
\begin{align*}
    \Y_h \defeq \begin{bmatrix}
        Y_1^\top \\ \vdots \\ Y_S^\top
    \end{bmatrix} \in \R^{\cS\times\cO^k}.
\end{align*}
By step 1, we have
\begin{align*}
    \Y_h\O_h^{\otimes k+} = \I_\cS + E,
\end{align*}
where the matrix $E\in\R^{\cS\times\cS}$ satisfies $|E_{ij}|\le S\delta$. We pick $\delta=1/(2S^2)$ (which requires $k\ge \sqrt{O}/\alpha^2\log 1/\delta$). Further, notice that each $Y_s\in\sets{0,1}^{\cO^k}$, and the {\bf locations of the $1$'s are disjoint} within the rows $s\in\cS$. By Lemma \ref{lem:norm_y} we have
\begin{align*}
    \norm{\Y_h}_{1\to 1} = 1.
\end{align*}

Step 3. Notice that (where $\sets{e_s^\top}_{s\in\cS}$ are rows of $E$)
\begin{align*}
    \norm{E}_{1\to 1} = \max_{\lone{x}=1} \sum_{s\in\cS} \abs{e_s^\top x} \le \sum_{s\in\cS} \linf{e_s} \le S^2\delta \le 1/2.
\end{align*}
By Lemma \ref{lem:inv} we know that $\I_\cS+E$ is invertible. Further, we have
\begin{align*}
    \norm{(\I_\cS+E)^{-1}}_{1\to 1} = \norm{\I_\cS + \sum_{k=1}^\infty (-1)^kE^k}_{1\to 1} \le 1 + \sum_{k=1}^\infty \norm{E}_{1\to 1}^k = \frac{1}{1-\norm{E}_{1\to 1}} \le 2.
\end{align*}
Finally, define the matrix
\begin{align*}
    \O_h^{\otimes k+} \defeq (\I_\cS + E)^{-1}\Y_h \in \R^{\cS\times\cO^k}.
\end{align*}
We have $\O_h^{\otimes k+}\O_h^{\otimes k}=(\I_\cS + E)^{-1}\Y_h\O_h^{\otimes k}=(\I_\cS + E)^{-1}(\I_\cS + E)=\I_\cS$. Further,
\begin{align*}
    \norm{ \O_h^{\otimes k+} }_{1\to 1} \le \norm{(\I_\cS+E)^{-1}}_{1\to 1} \cdot \norm{\Y_h}_{1\to 1} \le 2.
\end{align*}
This completes the proof.
\end{proof}
\subsection{Proof of Theorem \ref{theorem:cluster}}\label{app:cluster}
\begin{proof}
First, we introduce the Hindsight Observable Markov Decision Processes (HOMDPs), POMDPs where the latent states are revealed to the learner in hindsight.

\paragraph{HOMDP \citep{lee2023learning}} There are two phases in the HOMDP: train time and test time. During train time, at any given round $t \in[T]$, the learner produces a history-dependent policy $\pi^t$ which is deployed in the partially observable environment as if the learner is interacting with a standard POMDP. Once the $t$ th episode is completed, the latent states $s^t_{1: H}$ are revealed to the learner in hindsight, hence the terminology hindsight observability. \citet{lee2023learning} showed that HOP-B achieves the following estimation bound for HOMDPs
\begin{theorem}[Theorem 4.2 in \citet{lee2023learning}]\label{thm:lee}
    Let $\mathcal{M}$ be a HOMDP model with $S$ latent states and $O$ observations. With probability at least $1-\delta$, HOP-B outputs a sequence of policies $\pi^1, \ldots, \pi^T$ such that 
$$
\operatorname{Reg}(T)=\widetilde{\mathcal{O}}\left(\poly (H)\sqrt{\left(SO +S A \right) T}\right).
$$
\end{theorem}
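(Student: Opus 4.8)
The plan is to prove the regret bound by the standard optimism-in-the-face-of-uncertainty template, exploiting the fact that hindsight observability collapses estimation to a (nearly) fully-observed problem: once the latent states $s^t_{1:H}$ are revealed after episode $t$, the learner has i.i.d.\ samples of both the transition $\T_h(\cdot\mid s,a)$ (from revealed consecutive states) and the emission $\O_h(\cdot\mid s)$ (from the revealed state--observation pairs). I would maintain plug-in estimates $\hat{T}^t_h,\hat{O}^t_h$ together with visitation counts $n_t(s,a),n_t(s)$, and run the POMDP planner on $(\hat T_t,\hat O_t)$ with a bonus-augmented reward of exactly the form used later by \ost{}, namely $\hat r_h=\bar r_h+Hb_t(s)+b_t(s,a)$. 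The two summands in the target bound should be traced to two distinct sources: $SA$ to transition estimation and $SO$ to emission estimation.

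First I would establish the high-probability concentration event $\mathcal E$ (of probability $\ge 1-\delta$): an $\ell_1$/Bernstein deviation bound giving $\lone{\hat T^t_h(\cdot\mid s,a)-\T_h(\cdot\mid s,a)}\lesssim\sqrt{S\iota/n_t(s,a)}$ and $\lone{\hat O^t_h(\cdot\mid s)-\O_h(\cdot\mid s)}\lesssim\sqrt{O\iota/n_t(s)}$, with $\iota$ a logarithmic factor. I would then set $b_t(s,a)\asymp\sqrt{S\iota/n_t(s,a)}$ and $b_t(s)\asymp\sqrt{O\iota/n_t(s)}$ (suitably truncated) so that the planned value $\hat V^t$ over-estimates $V^\star$ on $\mathcal E$. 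Optimism is the key structural step: applying the simulation lemma below to the true optimal policy $\pi^\star$ gives $\hat V^t(\pi^\star)\ge V(\pi^\star)=V^\star$ whenever the accumulated bonus dominates the one-step model error, and since $\pi^t$ is optimal for the estimated model, $\hat V^t=\hat V^t(\pi^t)\ge\hat V^t(\pi^\star)\ge V^\star$.

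The heart of the argument is a POMDP simulation (value-difference) lemma for \emph{history-dependent} policies, which is where I expect the main difficulty. Unlike the MDP case, the planner's policy $\pi^t$ reads the observation sequence and maintains a belief, so its realized trajectory distribution depends on the \emph{entire} emission distribution $\O_h(\cdot\mid s)$, not merely on the expected per-step reward $\EE_{o\sim\O_h(\cdot\mid s)}[r(o)]$. This is precisely what forces the emission contribution to scale with $\sqrt{O}$ rather than the $\sqrt 1$ a bounded-reward Hoeffding bound would give, and the cascading effect of a mis-estimated belief on all subsequent steps is what produces the extra factor of $H$ multiplying $b_t(s)$. Concretely, by unrolling the belief dynamics and controlling the step-by-step $\ell_1$ drift of the belief under $\hat O_h$ versus $\O_h$, I would establish
\begin{align*}
    \hat V^t-V(\pi^t)\;\lesssim\;\sum_{h=1}^H\EE^{\pi^t}\!\brac{\,H\,b_t(s_h)+b_t(s_h,a_h)\,}.
\end{align*}

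Combining optimism with this lemma gives $\Reg(T)=\sum_t\paren{V^\star-V(\pi^t)}\le\sum_t\paren{\hat V^t-V(\pi^t)}$, bounded by the displayed sum over all $t$. The final step is a pigeonhole/counting summation: for each fixed $h$, Cauchy--Schwarz yields $\sum_t\sqrt{1/n_t(s_h^t)}\lesssim\sqrt{ST}$ and $\sum_t\sqrt{1/n_t(s_h^t,a_h^t)}\lesssim\sqrt{SAT}$, turning the emission term into $\poly(H)\sqrt{SOT}$ and the transition term into $\poly(H)\sqrt{SAT}$. The remaining subtlety is that the naive Hoeffding transition bonus $\sqrt{S/n}$ would leave a spurious extra $\sqrt S$; to reach the stated $\sqrt{SAT}$ I would use Bernstein bonuses and a law-of-total-variance argument (in the spirit of minimax-optimal MDP learning, \citealp{azar2017minimax}) to sum the per-step conditional variances to $\poly(H)$, absorbing the extra $\sqrt S$. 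Adding the two contributions and folding all remaining $H$-dependence into $\poly(H)$ delivers $\Reg(T)=\tO\paren{\poly(H)\sqrt{(SO+SA)T}}$, as claimed.
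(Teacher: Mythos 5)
You should first note what the paper actually does with this statement: it offers no proof at all. Theorem~\ref{thm:lee} is imported verbatim as Theorem~4.2 of \citet{lee2023learning} and is used as a black box inside the proof of Theorem~\ref{theorem:cluster}; the only trace of its internals in this paper is the empty lemma stub ``($\alpha$-vector representation)'' in Appendix~\ref{app:cluster}, which signals the machinery the cited proof relies on. Measured against that cited proof, your skeleton is the right one: hindsight-revealed states yield direct counts and plug-in estimates for both $\T_h$ and $\O_h$, twin bonuses $b_t(s,a)$ and $b_t(s)$ feed an optimistic planner, and a pigeonhole sum over counts produces $\sqrt{SOT}+\sqrt{SAT}$. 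Your attribution of the $SO$ term to emission estimation, the $SA$ term to transition estimation, and the extra factor of $H$ on $b_t(s)$ to error propagation through the dynamics is also correct.

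However, the two steps you yourself flag as ``the main difficulty'' are exactly where the proposal substitutes a placeholder for an argument, and one of them as literally stated would fail. First, controlling the ``step-by-step $\ell_1$ drift of the belief'' under $\hat{O}_h$ versus $\O_h$ is not a sound primitive: each Bayes update normalizes by the likelihood of the received observation, so belief error gets amplified by factors inversely proportional to observation probabilities, which have no lower bound, and a naive unrolling blows up. The standard repair is to avoid normalized beliefs entirely---either couple the trajectory distributions directly, bounding the total variation between $\PP^{\hat\theta,\pi}$ and $\PP^{\theta,\pi}$ by the expected sum of per-step $\ell_1$ errors in $\hat{T}$ and $\hat{O}$ (valid for history-dependent $\pi$ since the policy reads only observables), or use the $\alpha$-vector perturbation argument that \citet{lee2023learning} actually employ; either yields the simulation inequality you display. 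Second, the Bernstein/law-of-total-variance step that removes the spurious $\sqrt{S}$ from the transition term is asserted rather than executed, and its POMDP analogue is genuinely nontrivial: the Azar-style recursion tests $\hat{T}-\T$ against a \emph{fixed} value function, but in a POMDP the value-to-go from a latent state depends on the entire observable history, so there is no single $V^\star(s)$ to plug in. Supplying a fixed, history-indexed linear family against which scalar concentration can be applied is precisely what the $\alpha$-vector representation buys in the cited proof. So: correct road map, correct accounting of where the $SO$, $SA$, and $H$ factors come from, but the two load-bearing lemmas are missing, and the belief-drift route should be replaced by trajectory-level coupling or $\alpha$-vectors before this could count as a proof.
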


\begin{lemma}
    ($\alpha$-vector representation)
\end{lemma}

Our proof is a reduction from the result of \ref{thm:lee} combined with results for closeness testing to ensure that states are correct (up to permutation). Suppose in algorithm 2, the pseudo-states are indeed true states (up to permutation). Then, our algorithm 2 achieves regret bound 
\begin{align*}
    \tO\paren{ {\rm poly}(H)\cdot \paren{ \frac{SO}{\eps^2} + \frac{SA\sqrt{O}}{\eps^2\alpha^2} + \frac{SAO^{2/3}}{\eps^2\alpha^{4/3}}}}.
\end{align*}
Now we explain our proof, our Algorithm \ref{algo:clustering_based} are different from HOP-B in two points: 
\begin{enumerate}[topsep=0pt, leftmargin=2em, itemsep=0pt]
\item  HOP-B works for HOMDP, where the exact information of the pseudo-states can be immediately known. In $k$-MOMDP, we cannot determine the exact state even when we can distinguish all the states. Therefore, Algorithm \ref{algo:clustering_based} reduces to the HOP-B algorithm up to permute.
\item Since we cannot know the exact pseudo-states, we couldn't assume the reward on the pseudo state space $\cX$ is known for each $s\in \cX$. We can only estimate the reward towards the estimated emission kernel.
\begin{align*}
\bar{r}_h^t(s,a)  =\sum_{o\in \cO}\sum_{\ell \in[t]} \frac{r_h(o)\mathbf{1}\left\{s_h^{\ell}=s, o_h^{\ell}=o\right\}}{n_h^{t}(s)}=\sum_{o\in \cO}\widehat{\mathbb{O}}_t(o\mid s)r_h(o),
\end{align*}
where $\widehat{\mathbb{O}}_t$ is the estimated emission kernel in the $t$-th iteration. This leads to an extra error between the estimated reward function $\bar{r}$ and true reward function $r_h(s,a)=\sum_{o\in \cO}\mathbb{O}(o\mid s)r_h(o)$. 

Since we assume the reward function $r$ can be bounded by $1$, the error between the estimated reward function $\bar{r}$ and true reward function $r(s,a)$ can be bounded as:
\begin{align*}
    \bar{r}_h(s,a)-r_h(s,a)&=\sum_{o\in \cO}(\widehat{\mathbb{O}}_t(o\mid s)-\mathbb{O}(o\mid s))r_h(o)\\ &\leq \sum_{o\in \cO}\|\widehat{\mathbb{O}}_t(o\mid s)-\mathbb{O}(o\mid s)\|.
\end{align*}
which can be bounded by $\sqrt{(O \log (SO T H / \delta))/(n_t(s)})$ with probability $1-\delta$ as showed in \citet{lee2023learning}. Hence we can additionally handle the reward estimation, however, this will not result in a change of the rate, as we can just choose a larger constant in the exploration bonus for states in their HOP-B algorithm (line$4$, $\beta_2$) to ensure optimism still holds.
\end{enumerate}
The previous theorem requires the pseudo states to be true. To ensure this requires the guarantee of closeness testing, which we give here, we will prove it in Section \ref{app:close}. We state that with a high probability, we could identify the pseudo-states (up to permutation), which means that after an iteration, we could know whether the states visited in this iteration were visited before. We use the closeness testing algorithm to test whether two observation sequences were generated from the same state. 

\begin{lemma}[Closeness testing guarantee]\label{lem:close_gua}
    When $k=\cO((\sqrt{O}/\alpha^2+O^{2/3}/\alpha^{4/3})\log 1/\delta)$, with probability $1-\delta$ the following holds: Throughout the execution of Algorithm~\ref{algo:clustering_based}, we have that there exists a permutation $\pi:\cS\to\cS$, such that pseudo-states are up to permutation of true states (we could identify each state).
\end{lemma}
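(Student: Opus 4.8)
The plan is to decouple the argument into a \emph{statistical} part---showing that every closeness test executed inside \lsa{} returns the correct verdict with high probability---and a purely \emph{combinatorial} part---showing that, conditioned on all tests being correct, the clustering performed by \lsa{} recovers the true latent states up to a global relabeling. The key enabling fact is that under $\alpha$-distinguishability the only two possibilities a test must separate are ``same state'' (identical emission distributions) and ``different states'' (emission distributions $\alpha$-far in $\ell_1$), with no intermediate case to worry about.

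For the statistical part, I would first isolate a single-pair guarantee: for two sequences of $k$ i.i.d.\ observations drawn from emission distributions $p,q\in\Delta(\cO)$, \clt{} (Algorithm~\ref{algo:clustering}) returns \textbf{accept} with probability at least $1-\delta_0$ when $p=q$, and \textbf{reject} with probability at least $1-\delta_0$ when $\lone{p-q}\ge\alpha$, provided $k\gtrsim(\sqrt{O}/\alpha^2+O^{2/3}/\alpha^{4/3})\log(1/\delta_0)$. This is the central sub-lemma, paralleling Lemma~\ref{lem:test}. Given it, I would take a union bound over all \clt{} invocations during the run of \ost{}: at step $h$ of iteration $t$ the new $k$-observation block is tested against the earlier iterations assigned to each pseudo-state, so the total number of tests is at most $T^2H$. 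Setting the per-test failure to $\delta_0=\delta/(T^2H)$---which only inflates $M=\cO(\log(1/\delta_0))=\tO(\log(1/\delta))$, and hence $k$, by logarithmic factors, preserving the stated rate---yields a good event $\Eps$ with $\PP(\Eps)\ge 1-\delta$ on which \emph{every} executed test is correct.

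For the combinatorial part, I would work on $\Eps$ and argue by induction over the sequence of \clt{} queries that there is an injection from the true states encountered so far to pseudo-labels under which each observation block is assigned the label of its generating state. In the inductive step, a fresh block generated from a true state $s$ accepts (by test correctness) against exactly the cluster whose members were all generated from $s$ and rejects against every other cluster; hence the first-match loop in \lsa{} lands on the unique correct label if $s$ was seen before, and otherwise opens a new label, so the invariant is maintained. Extending this injection arbitrarily to a bijection $\pi:\cS\to\cS$ yields the claimed permutation, completing the proof.

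I expect the main obstacle to be the single-pair guarantee itself, i.e.\ calibrating the $\chi^2$-type statistic of~\citet{chan2013optimal}, namely $\sum_{o}\big((N_o-\tilde N_o)^2-N_o-\tilde N_o\big)/(N_o+\tilde N_o)$, against the threshold $\sqrt{3N_j}$ under Poissonized sampling. The delicate points are (i) bounding the statistic's mean and variance so that the $\ell_1$ separation $\alpha$ translates into a detectable gap---this is where the $\ell_2$-to-$\ell_1$ reduction contributes the $\sqrt{O}/\alpha^2$ term and the $\chi^2$-type variance analysis the $O^{2/3}/\alpha^{4/3}$ term---after which a constant per-block success probability is amplified to $1-\delta_0$ by the majority vote over the $M$ blocks and a Chernoff bound; and (ii) controlling the truncation event $\sum_j N_j>k$, which I would handle by allotting a constant-factor slack in $k$ so that $\sum_j N_j$, a Poisson variable, stays within budget except with probability $\delta_0$. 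Of these, extracting the signal from $\ell_1$-distinguishability through the testing statistic is the crux, as it is exactly what forces the $\sqrt{O}/\alpha^2+O^{2/3}/\alpha^{4/3}$ sample size.
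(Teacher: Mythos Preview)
Your proposal is correct and follows essentially the same approach as the paper: establish a per-pair test guarantee by invoking the closeness-testing result of \citet{chan2013optimal} under Poissonized sampling, amplify via majority vote over $M=\cO(\log(1/\delta))$ blocks, and control the truncation event $\sum_j N_j>k$ via a Poisson tail bound. Your treatment is in fact more complete than the paper's own proof, which stops at the per-pair guarantee and leaves both the union bound over the $\cO(T^2H)$ tests executed by \lsa{} and the inductive clustering-correctness argument implicit.
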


After identifying the pseudo-states (up to permutation), we can think that we have information about the state after each iteration, as we said in Section \ref{sec:mo}.

On the success event of closeness testing, states are indeed correct. Therefore we can invoke Theorem \ref{thm:lee} to obtain regret bound
\begin{align*}
    \tO\paren{ {\rm poly}(H)\cdot \paren{ \frac{SO}{\eps^2} + \frac{SAk}{\eps^2}} }= \tO\paren{ {\rm poly}(H)\cdot \paren{ \frac{SO}{\eps^2} + \frac{SA\sqrt{O}}{\eps^2\alpha^2} + \frac{SAO^{2/3}}{\eps^2\alpha^{4/3}}}}
\end{align*}
by taking $k=\cO((\sqrt{O}\alpha^2+O^{2/3}/\alpha^{4/3})\log 1/\delta)$ to the bound of Theorem \ref{thm:lee}, which finish the proof.
\end{proof}

\subsection{Closeness Testing}\label{app:close}
In this section, we prove the theoretical guarantee for closeness testing.

\begin{proof}[Proof of Lemma~\ref{lem:close_gua}]
Let's assume $X\sim \operatorname{Poi}(\lambda)$. We have tail bound for $X$: for any $x>0$,
\begin{align*}
\mathbb{P}(X>\lambda+x) \leq e^{x-(\lambda+x)\ln (1+\frac{x}{\lambda})}.
\end{align*}
Employing this tail bound, we conclude that Algorithm \ref{algo:clustering} will not return a fail with a probability of $1-\cO(\delta)$. Our subsequent analysis is contingent upon this event.

Based on Proposition 3 in \citet{chan2013optimal}, given $k=\cO((\sqrt{O}\alpha^2+O^{2/3}/\alpha^{4/3})\log 1/\delta)$, we can infer that for any $j\in [M]$: $Z^{(j)}=1$ with a probability of at least $2/3$ if $o_h^{t,(1:k)}$ and $o_h^{t',(1:k)}$ generated from the same state, $Z^{(j)}=0$ with a probability of at least $2/3$ if they are produced by different states. Utilizing standard repeating techniques, we find that with $\cO\left(\log \frac{1}{\delta}\right)$ iterations, we can attain an error probability of at most $\delta$.

Thus, we've established that if $o_h^{t,(1:k)}$ and $o_h^{t',(1:k)}$ are generated from the same state, Algorithm \ref{algo:clustering} will return an accept with a probability of $1-\delta$. Conversely, if they are generated from different states, the algorithm \ref{algo:clustering} will return a reject with a probability of $1-\delta$.
\end{proof}
\begin{lemma}\label{lem:test}
    Suppose $\cP$ is an $\alpha$-distinguishable POMDP, then we can construct tests $Z_s=\sets{Z_s(o_{1:k})}_{o_{1:k}\in\cO^k}$ for each $s\in\cS$, such that $Z_s=0 $ with probability at least $1-\delta$ under $\O_h(\cdot|s)$, and $Z_s=1$ with probability at least $1-\delta$ under $\O_h(\cdot|s')$ for any $s'\neq s$.
\end{lemma}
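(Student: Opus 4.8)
The plan is to read the construction of $Z_s$ as an instance of \emph{identity} (goodness-of-fit) testing against the \emph{known} emission distribution $p_s\defeq\O_h(\cdot\mid s)\in\Delta(\cO)$. Since the whole argument fixes one model $\theta$ and one step $h$, every $p_{s'}=\O_h(\cdot\mid s')$ is a known distribution over the $O$-element alphabet $\cO$, and the test $Z_s$ is fed $k$ i.i.d.\ draws $o_{1:k}$ from some unknown member $q\in\sets{p_{s'}:s'\in\cS}$. The sole structural input we use is $\alpha$-distinguishability: for every $s'\ne s$ we have $\lone{q-p_s}=\lone{\O_h(e_{s'}-e_s)}\ge\alpha$. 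Hence it suffices to build a deterministic map $Z_s:\cO^k\to\set{0,1}$ that outputs $0$ with probability $\ge 1-\delta$ when the $k$ samples come from $q=p_s$, and outputs $1$ with probability $\ge 1-\delta$ when $\lone{q-p_s}\ge\alpha$.

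First I would invoke an optimal identity tester from the distribution-testing literature \citep{paninski2008coincidence,valiant2011estimating,acharya2015optimal,chan2013optimal}: for the known $p_s$ there is a deterministic test $Z^0_s$ using $m=\cO(\sqrt{O}/\alpha^2)$ samples that accepts ($0$) with probability $\ge 2/3$ when $q=p_s$ and rejects ($1$) with probability $\ge 2/3$ whenever $\lone{q-p_s}\ge\alpha$, the latter holding uniformly over all $\alpha$-far $q$. Such a test thresholds a bias-corrected chi-square statistic of the form $\sum_{o\in\cO}\big((N_o-m\,p_s(o))^2-N_o\big)/(m\,p_s(o))$ (with $N_o$ the empirical count of $o$) against an $\cO(1)$-calibrated cutoff; I would cite rather than reprove this, noting it is a strictly easier primitive than the closeness test of Algorithm~\ref{algo:clustering} because $p_s$ is given in closed form. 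To drive the failure probability from the constant $1/3$ down to $\delta$, I would partition the $k$ observations into $M=\cO(\log(1/\delta))$ disjoint blocks of size $m$, run $Z^0_s$ on each block, and set $Z_s$ to the majority vote. The block outcomes are i.i.d.\ under $q$ and each is correct with probability $\ge 2/3$ in both the $q=p_s$ and the $\lone{q-p_s}\ge\alpha$ regimes, so a Chernoff bound yields correctness with probability $\ge 1-\delta$ in both regimes; the total budget is $k=Mm=\cO(\sqrt{O}\log(1/\delta)/\alpha^2)$, matching the requirement $k\ge\sqrt{O}\log(1/\delta)/\alpha^2$ used in the proof of Theorem~\ref{theorem:contain}. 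Because the block split and the threshold are deterministic, $Z_s$ is a genuine function of $o_{1:k}$, which is exactly what Step 2 of that proof needs in order to form the $\set{0,1}$-valued matrix $\Y_h$ and apply Lemma~\ref{lem:norm_y}.

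The main obstacle is obtaining the sharp $\sqrt{O}$ (rather than $O$) sample dependence while simultaneously pushing the error down to $\delta$. Coordinates $o$ with very small mass $p_s(o)$ are what make the naive chi-square statistic fragile, and handling them is precisely the content of the optimal testers of \citet{acharya2015optimal,chan2013optimal}; invoking their bias-corrected statistic rather than a plug-in estimator is what keeps the per-block cost at $\sqrt{O}/\alpha^2$. A secondary technical point is the bounded-sample interface: the \kmomdp{} feedback supplies exactly $k$ observations per state, whereas the cleanest constant-error analyses are stated for a Poissonized sample count. I would reconcile this either by running the tester on fixed-size blocks (whose multinomial counts differ from the Poissonized counts only negligibly in the constant-error regime), or, to mirror Algorithm~\ref{algo:clustering} exactly, by drawing Poisson block sizes and assigning a default output on the small-probability event that the total exceeds $k$, whose probability is $\cO(\delta)$ by a Poisson tail bound and is absorbed into the overall failure budget.
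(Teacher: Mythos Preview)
Your proposal is correct and essentially coincides with the paper's own proof: the paper also reduces to identity testing against the known emission $p_s=\O_h(\cdot\mid s)$, invokes the optimal $\cO(\sqrt{O}/\alpha^2)$-sample tester of \citet{chan2013optimal} (Theorem~2) with the bias-corrected chi-square statistic, and then amplifies via $M=\cO(\log(1/\delta))$ repetitions with a majority vote, using Poissonized block sizes with a ``fail'' branch when the total exceeds $k$ (exactly the option you describe at the end). Your discussion is in fact slightly more careful than the paper's on two points---the need for $Z_s$ to be a deterministic function of $o_{1:k}$ so that $\Y_h$ is well-defined, and the equivalence of fixed versus Poissonized block sizes---but the underlying argument is the same.
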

\begin{proof}
    We construct $Z_s$ by using the closeness testing technique. 
    
    First, we give to consider a problem: Given samples from an unknown distribution $p$, is it possible to distinguish whether $p$ equal to $\mathbb{O}$ versus $p$ being $\alpha$-far from every $\mathbb{O}$?  

    \citet{chan2013optimal} proposes an algorithm to achieve its lower bound $\sqrt{O}/\alpha^2$. We improve their algorithm by repeating $\log (1/\delta)$ times to attain an error probability of at most $\delta$. We denote $q_o=\mathbb{O}(o\mid s)$. The algorithm is listed in Algorithm \ref{algo:test2}.

    \begin{algorithm}[t]
\caption{Closeness Testing 2 \sf closeness\_test2($[o^{[i]}]_{i\in [k]}$)}\label{algo:test2}

\begin{algorithmic}[1]
     \REQUIRE  $[o^{[i]}]_{i\in [k]}$
    \STATE Sample $N_1,\cdots,N_M\sim \operatorname{Poi}(k/M)$, where $M=\cO(\log(1/\delta))$
    \RETURN \textbf{fail} \textbf{if} $N_1+\cdots+N_M>k$.
    
    \FOR{$j\in[M]$}
        \STATE $B_j=\{N_1+\cdots+N_{j-1}+1,\cdots,N_1+\cdots +N_j\}$
        \STATE $N_o^{(j)}=\sum_{i\in B_j}\textbf{1}\{o_i=o\}$
        \STATE $A^{(j)}=\{o: o\geq \alpha/(50 O)\}$
        \STATE $C^{(j)}=\sum_{o\in A^{(j)}}(\left(N_o^{(j)}-N_j q_o\right)^2-N_o^{(j)})/N_j q_o)$
        \STATE $z^{(j)}=\textbf{1}\{C^{(j)}\leq N_j\alpha^2/10\}$.
    \ENDFOR
    \RETURN \textbf{accept} \textbf{if} $\sum_{j\in[M]}z^{(j)}\geq M/2$, \textbf{else} \textbf{reject}
\end{algorithmic}
\end{algorithm}

 Finally, we apply the repeating technique to Theorem 2 in \citet{chan2013optimal}. Then we set $k=\sqrt{O}/\alpha^2\log 1/\delta$ and set $Z_s(o_{1:k})=\textbf{1}\{\operatorname{closeness\_test2(o_{1:k})}=\textbf{accept}\}$. Hence we can identify whether an observation sequence $o_{1:k}$ is generated from state $s$ with probability $1-\delta$. Therefore, we complete the proof of Lemma \ref{lem:test}.
\end{proof}

\end{document}